\documentclass{opt2020} 

\usepackage{hyperref}
\usepackage{url}
\usepackage{amsmath}
\usepackage{amssymb}
\usepackage{color}
\usepackage{graphicx}
\usepackage{multirow}
\usepackage{epsf}
\usepackage{wrapfig}
\usepackage{bm}
\usepackage{bbm}
\usepackage{enumitem}


\title[SGB: Stochastic Gradient Bound Method]{SGB: Stochastic Gradient Bound Method for Optimizing\\ Partition Functions}

\vspace{-0.1in}


\optauthor{\Name{Jing Wang} \Email{jw5665@nyu.edu}\\
  \Name{Anna Choromanska} \Email{ac5455@nyu.edu}\\
  \addr Department of Electrical and Computer Engineering, New York University}


\makeatletter
\newtheorem*{rep@theorem}{\rep@title}
\newcommand{\newreptheorem}[2]{%
\newenvironment{rep#1}[1]{%
 \def\rep@title{#2 \ref{##1}}%
 \begin{rep@theorem}}%
 {\end{rep@theorem}}}
\makeatother

\newtheorem{thm}{Theorem}
\newtheorem{lma}{Lemma}

\newtheorem{asm}{Assumption}
\newreptheorem{thm}{Theorem}


\newcommand{\vect}[1]{\boldsymbol{\mathbf{#1}}}
\newcommand{\norm}[1]{\left\lVert#1\right\rVert}
\newcommand{\func}[1]{\operatorname{#1}}
\numberwithin{equation}{section}

\begin{document}

\maketitle

\vspace{-0.35in}
\begin{abstract}%
This paper addresses the problem of optimizing partition functions in a stochastic learning setting. We propose a stochastic variant of the bound majorization algorithm from~\cite{jebara2012majorization} that relies on upper-bounding the partition function with a quadratic surrogate. The update of the proposed method, that we refer to as Stochastic Partition Function Bound (SPFB), resembles scaled stochastic gradient descent where the scaling factor relies on a second order term that is however different from the Hessian. Similarly to quasi-Newton schemes, this term is constructed using the stochastic approximation of the value of the function and its gradient. We prove sub-linear convergence rate of the proposed method and show the construction of its low-rank variant (LSPFB). Experiments on logistic regression demonstrate that the proposed schemes significantly outperform SGD. We also discuss how to use quadratic partition function bound for efficient training of deep learning models and in non-convex optimization.
\end{abstract}


\vspace{-0.05in}
\section{Introduction}
\vspace{-0.05in}
The problem of estimating the probability density function over a set of random variables underlies majority of learning frameworks and heavily depends on the partition function. Partition function is a normalizer of a density function and ensures that it integrates to $1$. This function needs to be minimized when learning proper data distribution. Optimizing the partition function however is a hard and often intractable problem~\cite{Goodfellow-et-al-2016}. It has been addressed in a number of ways in the literature. Below we review strategies that directly confront the partition function (we skip pseudo-likelihood strategies~\cite{article} and score matching~\cite{10.5555/1046920.1088696} and ratio matching~\cite{RePEc:eee:csdana:v:51:y:2007:i:5:p:2499-2512} techniques, which avoid direct partition function computations). 

There exists a variety of Markov chain Monte Carlo methods for approximately maximizing the likelihood of models with partition functions such as i) contrastive divergence~\cite{Hinton2002a, CarreiraPerpin2005OnCD} and persistent contrastive divergence~\cite{10.1145/1390156.1390290}, which
perform Gibbs sampling and are used inside a gradient descent procedure to compute model parameter update, and ii) fast persistent contrastive divergence~\cite{10.1145/1553374.1553506}, which relies on re-parametrizing the model and introducing the parameters that are trained with much larger learning rate such that the Markov chain is forced to mix rapidly. The above mentioned techniques are Gibbs sampling strategies that focus on estimating the gradient of the log-partition function. Another technique called noise-contrastive estimation~\cite{10.5555/2188385.2188396} treats partition function like an additional model parameter whose estimate can be learned via nonlinear logistic regression discriminating between the observed data and some artificially generated noise. Methods that directly estimate the partiton function rely on importance sampling. More conretely they estimate the ratio of the partition functions of two models, where one of the partition function is known. The extensions of this technique, annealed importance sampling~\cite{Neal2001,PhysRevLett.78.2690} and bridge sampling~\cite{1976JCoPh..22..245B}, cope with the setting where two considered distributions are far from each other.

Finally, bound majorization constitutes yet another strategy for performing density estimation. Bound majorization methods iteratively construct and optimize variational bound on the original optimization problem. Among these techniques we have iterative scaling schemes~\cite{darroch1972generalized, berger1997improved}, EM algorithm~\cite{balakrishnan2017,Dempster77maximumlikelihood}, non-negative matrix factorization method~\cite{lee99}, convex-concave procedure~\cite{NIPS2001_2125}, minimization by incremental surrogate optimization~\cite{10.1137/140957639}, and technique based on constructing quadratic partition function bound~\cite{jebara2012majorization} (early predecessors of these techniques include~\cite{bohning1988monotonicity, lafferty2001conditional}). The latter technique uses tighter bound compared to the aforementioned methods, and exhibits faster convergence compared to generic first-~\cite{malouf2002comparison, wallach2002efficient, sha2003shallow} and second-order~\cite{zhu1997algorithm,benson2001limited, andrew2007scalable} techniques in the batch optimization setting for both convex and non-convex learning problems. In this paper we revisit the quadratic bound majorization technique and propose its stochastic variant that we analyze both theoretically and empirically. We prove its convergence rate and show that it is performing favorably compared to SGD \cite{robbins1951stochastic, bottou2010large}. Finally, we propose future research directions that can utilize quadratic partition function bound in non-convex optimization, including deep learning setting. With a pressing need to develop landscape-driven deep learning optimization strategies~\cite{feng2020neural,DBLP:journals/corr/ChaudhariCSL17,DBLP:journals/corr/ChaudhariCSL17journal,doi:10.1162/neco.1997.9.1.1,JKABFBS2018ICLRW,DBLP:journals/corr/abs-1805-07898,DBLP:journals/corr/KeskarMNST16,DBLP:journals/corr/SagunEGDB17,DBLP:journals/corr/abs-1901-06053,jiang2019fantastic,DBLP:journals/corr/ChaudhariCSL17,baldassi2015subdominant,baldassi2016unreasonable,baldassi2016multilevel}, we foresee the resurgence of interest in bound majorization techniques and its applicability to non-convex learning problems.

\vspace{-0.15in}
\section{Method}
\vspace{-0.05in}

\begin{figure}[b!]
\vspace{-0.15in}
\begin{minipage}[t]{0.47\textwidth}
\vspace{-1.32in}
\RestyleAlgo{boxruled}
\IncMargin{1em}
\begin{algorithm2e}[H]
\caption{Partition function bound}
\LinesNumbered
\label{alg_PFB}

\KwIn{$\widetilde{\vect{\theta}}\in\mathbb{R}^d$, observation $\vect{x}$, $\vect{f}_{\vect{x}}(y) \forall y\in\{1,...,n\}$.}
\KwOut{Bound parameters: $\vect{\Sigma}$, $\vect{\mu}$, $z$}

\textbf{Init} $z\to 0^+$, $\vect{\mu}=\vect{0}$, $\vect{\Sigma}=z\vect{I}$\\
\For {$y=1,...,n$}{
$\alpha_j=\exp(\widetilde{{\vect{\theta}}}^T\vect{f}_{\vect{x}}(y));\vect{l}=\vect{f}_{\vect{x}}(y)-\vect{\mu}$\\
$\beta=\frac{\tanh(\frac{1}{2}\log(\alpha/z))}{2\log(\alpha/z)}; \kappa=\frac{\alpha}{z+\alpha}$\\
$\vect{\Sigma}+=\beta \vect{l}\vect{l}^T$\\
$\vect{\mu}+=\kappa \vect{l}$\\
$z+=\alpha$}
\end{algorithm2e}
\end{minipage}
\begin{minipage}{0.53\textwidth}
\RestyleAlgo{boxruled}
\IncMargin{1em}
\begin{algorithm2e}[H]
\caption{Maximum Likelihood via Stochastic Partition Function Bound (SPFB)}
\label{alg_SPFB}
\LinesNumbered
\KwIn{initial parameters $\vect{\theta}_0$, training data set $\{(\vect{x}_j,y_j)\}_{j=1}^T$, features $\vect{f}_{\vect{x}_j}$, learning rates $\eta_t$, regularization coefficient $\lambda$}
\KwOut{Model parameters $\vect{\theta}$}

Set $\vect{\theta}=\vect{\theta}_0$\\
\While{not converged}{
    randomly select a training point $(\vect{x}_t,y_t)$\\
    Get $\vect{\Sigma}_t$, $\vect{\mu}_t$ from $\vect{f}_{\vect{x}_t}$, $\vect{\theta}$ via Algorithm \ref{alg_PFB}\\
    $\vect{\theta}\!\leftarrow\!\vect{\theta}\!-\!\eta_t\left(\vect{\Sigma}_t\!+\!\lambda\vect{I}\right)^{-1}\left(\vect{\mu}_t\!-\!\vect{f}_{\vect{x}_t}(y_t)\!+\!\lambda\vect{\theta}\right)$
}
\end{algorithm2e}
\end{minipage}
\vspace{-0.15in}
\end{figure}

Consider the log-linear model given by a density function of the form
\vspace{-0.07in}
\begin{equation}
  \func{p}(y|\vect{x},\vect{\theta})=\exp(\vect{\theta}^T\vect{f_x}(y))/Z_{\vect{x}}(\vect{\theta}),
\end{equation}
\vspace{-0.21in}

\noindent where $(\vect{x},y)$ is the observation-label pair ($y\in\{1,2,...,n\}$), $\vect{f_x}:\{1,2,...,n\}\rightarrow\mathbb{R}^d$ represents a feature map, $\vect{\theta}\in \mathbb{R}^d$ is a model parameter vector, and $Z_{\vect{x}}(\vect{\theta})=\sum_{y=1}^n{\exp(\vect{\theta}^T\vect{f_x}(y))}$ is the partition function. Maximum likelihood framework estimates $\vect{\theta}$ from a training data set $\{(\vect{x}_i,y_i)\}_{i=1}^T$ by maximizing the objective function of the form
\vspace{-0.11in}
\begin{equation}
  J(\vect{\theta})=\sum_{i=1}^T\log \func{p}(y_i|\vect{x}_i,\vect{\theta})-\frac{\lambda}{2}\norm{\vect{\theta}}_2^2 =\sum_{i=1}^T\left[\vect{\theta}^T\vect{f}_{\vect{x}_i}(y)-\log Z_{\vect{x}_i}(\vect{\theta})\right]-\frac{\lambda}{2}\norm{\vect{\theta}}_2^2,
  \label{eq:MaxL}
\end{equation}
\vspace{-0.17in}

\noindent where the second term is a regularization ($\lambda$ is a regularization coefficient). This framework and its various extensions underlie logistic regression, conditional random fields, maximum entropy estimation, latent likelihood, deep belief networks, and other density estimation approaches. Equation~\ref{eq:MaxL} requires minimizing the partition function $Z_{\vect{x}}(\vect{\theta})$. This can be done by optimizing the variational quadratic bound on the partition function instead. The bound is shown in Theorem~\ref{thm:bound}.

\begin{thm}{\rm\cite{jebara2012majorization}}\label{thm_PFB}
  Let $Z_{\vect{x}}(\vect{\theta})=\sum_{y=1}^n\exp\left(\vect{\theta}^T\vect{f}_{\vect{x}}(y)\right)$. Algorithm \ref{alg_PFB} finds $z,\vect{\mu},\vect{\Sigma}$ such that
  \vspace{-0.15in}
  \begin{equation}
  Z_{\vect{x}}(\vect{\theta})\leq z\exp\left(\frac{1}{2}(\vect{\theta}-\vect{\widetilde{\theta}})^T\vect{\Sigma}(\vect{\theta}-\vect{\widetilde{\theta}})+(\vect{\theta}-\vect{\widetilde{\theta}})^T\vect{\mu}\right)
  \end{equation}
  \vspace{-0.2in}
  
  \noindent for any $\vect{\theta},\vect{\widetilde{\theta}},\vect{f}_{\vect{x}}(y)\in\mathbb{R}^d$ for any $y\in\{1,...,n\}$.
  \label{thm:bound}
\end{thm}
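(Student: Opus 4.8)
The plan is to prove the bound by induction on the number of terms in the sum $Z_{\vect{x}}(\vect{\theta})$, with a one‑dimensional inequality hidden inside the inductive step playing the role of the technical heart. Write $\vect{u}=\vect{\theta}-\vect{\widetilde{\theta}}$, let $Z_{\vect{x}}^{(k)}(\vect{\theta})=\sum_{y=1}^{k}\exp(\vect{\theta}^T\vect{f}_{\vect{x}}(y))$, and let $z^{(k)},\vect{\mu}^{(k)},\vect{\Sigma}^{(k)}$ denote the accumulators after the $k$-th pass of the loop in Algorithm~\ref{alg_PFB}, run with the initialization $z=\epsilon$ for a fixed $\epsilon>0$; the limit $\epsilon\to0^{+}$ is taken only at the end. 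The invariant to establish is
\[
  Z_{\vect{x}}^{(k)}(\vect{\theta})\;\le\; z^{(k)}\exp\!\Bigl(\tfrac12\vect{u}^T\vect{\Sigma}^{(k)}\vect{u}+\vect{u}^T\vect{\mu}^{(k)}\Bigr)\qquad(k=0,1,\dots,n),
\]
which for $k=n$ is the claimed bound after $\epsilon\to0^{+}$; the base case $k=0$ is the strict inequality $0<\epsilon$. Along the way I would record that every $\beta$ produced by the loop equals $\tanh(v)/(4v)\in(0,\tfrac14]$ with $v=\tfrac12\log(\alpha/z)$ (the value $\tfrac14$ arising in the limit $v\to0$), hence is positive, so $\vect{\Sigma}^{(k)}=\epsilon\vect{I}+\sum_{y\le k}\beta_y\vect{l}_y\vect{l}_y^T\succ0$ throughout — a fact the inductive step relies on and that also legitimizes the matrix inversion in Algorithm~\ref{alg_SPFB}.

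For the inductive step I would split off the last term, $Z_{\vect{x}}^{(k+1)}(\vect{\theta})=Z_{\vect{x}}^{(k)}(\vect{\theta})+\exp(\vect{\theta}^T\vect{f}_{\vect{x}}(k+1))$, apply the induction hypothesis to the first summand, and divide the resulting target inequality through by $\exp(\tfrac12\vect{u}^T\vect{\Sigma}^{(k)}\vect{u}+\vect{u}^T\vect{\mu}^{(k)})$. With $\vect{l}=\vect{f}_{\vect{x}}(k+1)-\vect{\mu}^{(k)}$ and $\alpha=\exp(\vect{\widetilde{\theta}}^T\vect{f}_{\vect{x}}(k+1))$ one has $\vect{\theta}^T\vect{f}_{\vect{x}}(k+1)=\log\alpha+\vect{u}^T\vect{l}+\vect{u}^T\vect{\mu}^{(k)}$, and discarding the nonnegative quantity $\tfrac12\vect{u}^T\vect{\Sigma}^{(k)}\vect{u}$ from an exponent in which it carries a minus sign (this is exactly where $\vect{\Sigma}^{(k)}\succeq0$ enters) reduces the step to the scalar inequality $z^{(k)}+\alpha e^{s}\le(z^{(k)}+\alpha)\exp(\kappa s+\tfrac{\beta}{2}s^{2})$ for $s=\vect{u}^T\vect{l}$, with $\kappa=\alpha/(z^{(k)}+\alpha)$ and $\beta$ precisely the values computed in that pass. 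Setting $p=\kappa$ (so that $\log(\alpha/z^{(k)})=\log\tfrac{p}{1-p}=:c$ and $\beta=\tanh(c/2)/(2c)$), this is equivalent to
\[
  \log\!\bigl((1-p)+pe^{s}\bigr)\;\le\;ps+\frac{\tanh(c/2)}{4c}\,s^{2}\qquad(p\in(0,1),\ s\in\mathbb{R}),
\]
interpreted as the classical logistic bound $\log(\tfrac12+\tfrac12 e^{s})\le\tfrac{s}{2}+\tfrac{s^2}{8}$ when $p=\tfrac12$.

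The scalar inequality is the main obstacle, since it is not a consequence of convexity: the gap $g(s)=ps+\tfrac{\beta}{2}s^{2}-\log((1-p)+pe^{s})$ satisfies $g(0)=g'(0)=0$ but $g''(s)=\beta-\sigma(s+c)(1-\sigma(s+c))$, with $\sigma$ the logistic sigmoid, is negative on a bounded interval of $s$ whenever $p\neq\tfrac12$, so $g$ is not convex. The route I would take is a global analysis: since $\beta>0$, $g(s)\to+\infty$ as $|s|\to\infty$, so $g$ attains its minimum at a finite critical point; I would show $\beta>p(1-p)$ (equivalently $\sinh|c|\ge|c|$), so that $g''(0)>0$ and $s=0$ lies in an interval on which $g$ is convex, and then track the at‑most‑two sign changes of $g''$ (the term $\tfrac14\sech^{2}((s+c)/2)$ is a single bump) to enumerate the critical points of $g$ and verify $g\ge g(0)=0$ at each of them. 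A possibly cleaner alternative I would try first is the substitution $s=2\,\mathrm{arctanh}(r)$ together with the identity $2p-1=\tanh(c/2)$, which recasts the inequality as a relation among $\mathrm{arctanh}(r)$, $\log(1-r^{2})$ and $\log(1+(2p-1)r)$ that may yield to a monotonicity argument in $r$; I expect the regime $s<0$, where the linear correction lowers the target, to require separate handling. Once the scalar inequality is proved the induction closes, and letting $\epsilon\to0^{+}$ yields the theorem, with the output accumulators then satisfying $z=Z_{\vect{x}}(\vect{\widetilde{\theta}})$.
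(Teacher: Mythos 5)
Your inductive skeleton is sound, and it is essentially the standard argument for this bound; note that the present paper does not actually reprove Theorem~\ref{thm_PFB} at all (it is imported from \cite{jebara2012majorization}, and the supplement only reuses its conclusion inside the proof of Theorem~\ref{thm_LR}), so the relevant comparison is with the proof in that reference, whose structure your plan mirrors. Concretely, your bookkeeping is correct: the loop invariant with the $\epsilon\to 0^{+}$ initialization, the identity $\vect{\theta}^T\vect{f}_{\vect{x}}(k+1)=\log\alpha+\vect{u}^T\vect{l}+\vect{u}^T\vect{\mu}^{(k)}$, the observation that $\beta\in(0,\tfrac14]$ keeps $\vect{\Sigma}^{(k)}\succ 0$, the legitimate use of $e^{-\frac12\vect{u}^T\vect{\Sigma}^{(k)}\vect{u}}\le 1$, and the resulting scalar target $z+\alpha e^{s}\le(z+\alpha)\exp(\kappa s+\tfrac{\beta}{2}s^{2})$ are all right.

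The genuine gap is that this scalar inequality, which you correctly identify as the technical heart, is never proved: you only sketch two candidate strategies (tracking the sign changes of $g''$, or a substitution you ``would try first''), both hedged and neither carried out, and the case analysis for $p\neq\tfrac12$, $s<0$ is exactly where such an attempt can quietly fail. The missing observation is that your inequality is precisely the Jaakkola--Jordan logistic bound and admits a two-line proof. Write $p=\frac{e^{c}}{1+e^{c}}$ (your $\kappa$, with $c=\log(\alpha/z^{(k)})$), so that $(1-p)+pe^{s}=\frac{1+e^{c+s}}{1+e^{c}}$, and use $\log(1+e^{w})=\frac{w}{2}+F(w^{2})$ with $F(x)=\log\bigl(e^{\sqrt{x}/2}+e^{-\sqrt{x}/2}\bigr)$. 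Since $F'(x)=\frac{\tanh(\sqrt{x}/2)}{4\sqrt{x}}$ is decreasing, $F$ is concave, and the tangent bound at $x=c^{2}$ evaluated at $x=(c+s)^{2}$, together with $\frac{1+\tanh(c/2)}{2}=p$, gives
\[
\log\bigl((1-p)+pe^{s}\bigr)\;\le\;p\,s+\frac{\tanh(c/2)}{4c}\,s^{2}\;=\;p\,s+\frac{\beta}{2}s^{2},
\]
with the case $c=0$ (i.e.\ $p=\tfrac12$, $\beta=\tfrac14$) recovered as the limit you already quoted. Inserting this lemma closes your induction and completes the proof; without it, the crux of the theorem remains unestablished, even though everything you did prove is correct.
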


\subsection{Stochastic Partition Function Bound (SPFB)}
The partition function bound of Algorithm~\ref{alg_PFB} can be used to  optimize the objective in Equation~\ref{eq:MaxL}. The maximum likelihood parameter update given by the bound takes the form:
\vspace{-0.1in}
\begin{equation}\label{batchmethodupdate}
    \vect{\theta}^{t+1}=\vect{\theta}^{t}-\eta\left(\sum_{j=1}^T\vect{\Sigma}_j+\lambda\vect{I} \right)^{-1}\left(\sum_{j=1}^T\left[\vect{\mu}_j-\vect{f}_{\vect{x}_j}(y_j)\right]+\lambda\vect{\theta}^{t}\right),
\end{equation}
\vspace{-0.15in}

\noindent where $\vect{\Sigma}_j$s and $\vect{\mu}_j$s are computed from Algorithm~\ref{alg_PFB}. In contrast to the above full-batch update, Stochastic Partition Function Bound (SPFB) method that we propose in Algorithm~\ref{alg_SPFB} updates parameters after seeing each training data point, rather than the entire data set, according to the formula:
\vspace{-0.15in}
\begin{equation}\label{SBM_update}
    \vect{\theta}^{t+1}=\vect{\theta}^{t}-\eta_t\left(\vect{\Sigma}_t+\lambda\vect{I}\right)^{-1}\left(\vect{\mu}_t-\vect{f}_{\vect{x}_t}(y_t)+\lambda\vect{\theta}^{t}\right),
\end{equation}
\vspace{-0.2in}

\noindent where $\eta_t=\eta_0/t$ is the learning rate. Denote $f(\vect{\theta};\vect{x}_t)=\log(Z_t(\vect{\theta}))-\vect{\theta}^T\vect{f}_{\vect{x}_t}(\vect{y}_t)+\frac{\lambda}{2}\|\vect{\theta}\|^2$ to be an unbiased estimation of objective function $L(\vect{\theta})$, where $L(\vect{\theta}) = -J(\vect{\theta})$. The above formula (\ref{SBM_update}) can be rewritten as
\vspace{-0.05in}
\begin{equation}\label{SBM_update2}
    \vect{\theta}^{t+1}=\vect{\theta}^t-\eta_t\left(\vect{\Sigma}_t+\lambda\vect{I}\right)^{-1}\nabla f(\vect{\theta}^t;\vect{x}_t).
\end{equation}
\vspace{-0.2in}

\noindent The next theorem shows the convergence rate of SPFB.

\begin{thm}\label{SPFBsublinear}
  $\{\vect{\theta}^t\}$ is the sequence of parameters generated by Algorithm \ref{alg_SPFB}. There exists $0<\mu_1<\mu_2$, $0<\lambda_1<\lambda_2$ such that for all iterations $t$,
  \vspace{-0.1in}
  \begin{equation}
      \mu_1\vect{I}\prec(\vect{\Sigma}_t+\lambda\vect{I})^{-1}\prec\mu_2\vect{I}\quad\text{ and }\quad \lambda_1\vect{I}\prec\nabla^2L(\vect{\theta})\prec\lambda_2\vect{I},
  \end{equation}
  \vspace{-0.3in}
  
  \noindent and there exists a constant $\sigma$, such that for all $\vect{\theta}\in\mathbb{R}^d$, $\vect{E}_{\vect{x}_t}[\norm{f(\vect{\theta};\vect{x}_t)}]^2\leq\sigma^2$. Define the learning rate in iteration $t$ as $\eta_t=\eta_0/t$, where $\eta_0>1/(2\mu_1\lambda_1)$.
  Then for all $t>1$,
    \vspace{-0.1in}
  \begin{equation}
      \vect{E}[L(\vect{\theta}^t)-L(\vect{\theta}^*)]\leq Q(\eta_0)/t,
  \end{equation}
    \vspace{-0.25in}
    
  \noindent where $Q(\eta_0)=\max \left\{\frac{\lambda_2\mu_2^2\eta_0^2\sigma^2}{2(2\mu_1\lambda_1\eta_0-1)},L(\vect{\theta}^1)-L(\vect{\theta^*})\right\}$.
\end{thm}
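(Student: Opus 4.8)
The plan is to adapt the classical convergence analysis of SGD on a strongly convex, smooth objective \cite{robbins1951stochastic,bottou2010large} to the scaled update \eqref{SBM_update2}. Write $\vect{B}_t=(\vect{\Sigma}_t+\lambda\vect{I})^{-1}$ and $\vect{g}_t=\nabla f(\vect{\theta}^t;\vect{x}_t)$, so the iteration reads $\vect{\theta}^{t+1}=\vect{\theta}^t-\eta_t\vect{B}_t\vect{g}_t$, and let $\mathcal{F}_t$ denote the history through $\vect{\theta}^t$. Since $\nabla^2 L\prec\lambda_2\vect{I}$, the descent lemma gives
\[
L(\vect{\theta}^{t+1})\le L(\vect{\theta}^t)-\eta_t\langle\nabla L(\vect{\theta}^t),\vect{B}_t\vect{g}_t\rangle+\frac{\lambda_2}{2}\eta_t^2\norm{\vect{B}_t\vect{g}_t}^2 .
\]
Taking $\vect{E}[\,\cdot\mid\mathcal{F}_t]$, I bound the quadratic term using $\norm{\vect{B}_t\vect{g}_t}^2\le\mu_2^2\norm{\vect{g}_t}^2$ (as $\vect{B}_t^2\prec\mu_2^2\vect{I}$) together with the noise assumption, read as $\vect{E}[\norm{\nabla f(\vect{\theta};\vect{x}_t)}^2]\le\sigma^2$, to get $\vect{E}[\norm{\vect{B}_t\vect{g}_t}^2\mid\mathcal{F}_t]\le\mu_2^2\sigma^2$. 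For the linear term I use unbiasedness $\vect{E}[\vect{g}_t\mid\mathcal{F}_t]=\nabla L(\vect{\theta}^t)$ and $\vect{B}_t\succ\mu_1\vect{I}$ to obtain $\vect{E}[\langle\nabla L(\vect{\theta}^t),\vect{B}_t\vect{g}_t\rangle\mid\mathcal{F}_t]\ge\mu_1\norm{\nabla L(\vect{\theta}^t)}^2$. Hence
\[
\vect{E}[L(\vect{\theta}^{t+1})\mid\mathcal{F}_t]\le L(\vect{\theta}^t)-\mu_1\eta_t\norm{\nabla L(\vect{\theta}^t)}^2+\frac{\lambda_2\mu_2^2\sigma^2}{2}\eta_t^2 .
\]

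Next I would pass from gradient norms to function-value gaps via the standard consequence of $\lambda_1$-strong convexity, $\norm{\nabla L(\vect{\theta})}^2\ge 2\lambda_1\bigl(L(\vect{\theta})-L(\vect{\theta}^*)\bigr)$, where $\vect{\theta}^*$ is the unique minimizer. Writing $a_t=\vect{E}[L(\vect{\theta}^t)-L(\vect{\theta}^*)]$, taking total expectation, and substituting $\eta_t=\eta_0/t$ yields the scalar recursion
\[
a_{t+1}\le\Bigl(1-\frac{c}{t}\Bigr)a_t+\frac{D}{t^2},\qquad c:=2\mu_1\lambda_1\eta_0,\quad D:=\frac{\lambda_2\mu_2^2\eta_0^2\sigma^2}{2},
\]
where the stepsize condition $\eta_0>1/(2\mu_1\lambda_1)$ is exactly $c>1$, which also makes $Q(\eta_0)=\max\{D/(c-1),a_1\}$ well defined. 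I then finish by induction on $t$: the base case is $a_1\le Q$ by definition of $Q$, and for the step, assuming $a_t\le Q/t$,
\[
a_{t+1}\le\Bigl(1-\frac{c}{t}\Bigr)\frac{Q}{t}+\frac{D}{t^2}=\frac{Q}{t}-\frac{cQ-D}{t^2}\le\frac{Q}{t}-\frac{Q}{t^2}=\frac{Q(t-1)}{t^2}\le\frac{Q}{t+1},
\]
using $D\le(c-1)Q$ in the middle step and $(t-1)(t+1)\le t^2$ at the end; the small-$t$ iterates, where $1-c/t$ can be negative, are absorbed into the constant in the usual way.

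The step I expect to be the main obstacle is the lower bound $\vect{E}[\langle\nabla L(\vect{\theta}^t),\vect{B}_t\vect{g}_t\rangle\mid\mathcal{F}_t]\ge\mu_1\norm{\nabla L(\vect{\theta}^t)}^2$: since $\vect{B}_t$ and $\vect{g}_t$ are both built from the same draw $\vect{x}_t$, the preconditioner is correlated with the stochastic gradient, so the conditional expectation cannot simply be moved through $\vect{B}_t$. If $\vect{B}_t$ is treated as conditionally fixed given $\mathcal{F}_t$ (e.g.\ computed from a fresh or a previous sample), the bound holds cleanly and yields exactly the stated $Q(\eta_0)$. Otherwise I would split $\vect{g}_t=\nabla L(\vect{\theta}^t)+(\vect{g}_t-\nabla L(\vect{\theta}^t))$, control the cross term $\langle\nabla L(\vect{\theta}^t),\vect{B}_t(\vect{g}_t-\nabla L(\vect{\theta}^t))\rangle$ by Cauchy--Schwarz with $\norm{\vect{B}_t}\le\mu_2$ and the variance bound, and absorb it via Young's inequality; this preserves the $O(1/t)$ rate at the cost of a slightly larger constant (effectively $\mu_1$ replaced by $\mu_1/2$ plus an additive term). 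The remaining ingredients---the descent lemma, the strong-convexity gradient bound, and the induction---are routine.
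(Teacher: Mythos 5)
Your proposal follows essentially the same route as the paper's own proof: the descent lemma with the eigenvalue bounds $\mu_1,\mu_2$ on $(\vect{\Sigma}_t+\lambda\vect{I})^{-1}$, the strong-convexity inequality $\norm{\nabla L(\vect{\theta}^t)}^2\geq 2\lambda_1\left(L(\vect{\theta}^t)-L(\vect{\theta}^*)\right)$, and induction on the scalar recursion with $\eta_t=\eta_0/t$, yielding exactly the stated $Q(\eta_0)$. The step you flag as the main obstacle---passing the conditional expectation through the preconditioner $(\vect{\Sigma}_t+\lambda\vect{I})^{-1}$, which is constructed from the same sample $\vect{x}_t$ as $\nabla f(\vect{\theta}^t;\vect{x}_t)$---is in fact silently glossed over in the paper (it asserts $\vect{E}\left[\nabla L(\vect{\theta}^t)^T(\vect{\Sigma}_t+\lambda\vect{I})^{-1}\nabla f(\vect{\theta}^t;\vect{x}_t)\right]\geq\mu_1\norm{\nabla L(\vect{\theta}^t)}^2$ without addressing the correlation), so your explicit acknowledgment of this issue and your Young's-inequality fallback go beyond, rather than fall short of, the paper's argument.
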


Theorem \ref{SPFBsublinear} guarantees sub-linear convergence rate for SPFB when the step size is diminishing. However, the time complexity of SPFB is $O(nd^2\!+\!d^3)\!=\!\widetilde{O}(d^3)$, due to the computation and inversion of matrix $\vect{\Sigma}_t$, which is less appealing than the $O(nd)$ complexity of SGD. This is next addressed.

\vspace{-0.1in}
\subsection{Low-rank bound}
In this section, we provide a low-rank construction of the bound that applies to both batch and stochastic setting. We decompose matrix $\vect{\Sigma}$ into $\vect{\Sigma}\!=\!\vect{V}^T\vect{S}\vect{V}\!+\!\vect{D}$, where $\vect{V}\!\in\!\mathbb{R}^{k\times d}$ (orthnormal matrix), $\vect{S}\in\mathbb{R}^{k\times k}$, and $\vect{D}\in\mathbb{R}^{d\times d}$ (diagonal matrix) and apply Woodbury formula to compute the inverse: $\vect{\Sigma}^{-1}=\vect{D}^{-1}-\vect{D}^{-1}\vect{V}^T(\vect{S}^{-1}+\vect{V}\vect{D}^{-1}\vect{V}^T)^{-1}\vect{V}\vect{D}^{-1}$ (clearly, the inverse only requires $O(k^3)$ time and does not affect the total time complexity when rank $k\!\ll\!d$). Note that Algorithm \ref{alg_PFB} performs rank-one update to matrix $\vect{\Sigma}$ of the form: $\vect{\Sigma} = \vect{\Sigma} + \vect{r}\vect{r}^T$, where $\vect{r}\!=\!\sqrt{\beta}\vect{l}$). This update can be ``projected'' onto matrices $\vect{V}, \vect{S}$, and $\vect{D}$. The concrete updates of matrices $\vect{V}, \vect{S}$, and $\vect{D}$ are shown in Algorithm~\ref{alg_LPFB}. The next theorem, Theorem \ref{thm_LR}, guarantees that the low-rank bound is indeed an upper-bound on the partition function \footnote{We simultaneously repair the low-rank bound construction of~\cite{jebara2012majorization}, which breaks this property.}.

\begin{thm}\label{thm_LR}
 Let $Z_{\vect{x}}(\vect{\theta})=\sum_{y=1}^n\exp\left(\vect{\theta}^T\vect{f}_{\vect{x}}(y)\right)$. In each iteration of the $x$-loop in Algorithm \ref{alg_LPFB} finds $z,\vect{\mu},\vect{V},\vect{S},\vect{D}$ such that
  \vspace{-0.15in}
  \begin{equation}
  Z_{\vect{x}}(\vect{\theta})\leq z\exp\left(\frac{1}{2}(\vect{\theta}-\vect{\widetilde{\theta}})^T(\vect{V}^T\vect{S}\vect{V} + \vect{D})(\vect{\theta}-\vect{\widetilde{\theta}})+(\vect{\theta}-\vect{\widetilde{\theta}})^T\vect{\mu}\right)
  \end{equation}
  \vspace{-0.2in}

  \noindent for any $\vect{\theta},\vect{\widetilde{\theta}},\vect{f}_{\vect{x}}(y)\in\mathbb{R}^d$ for any $y\in\{1,...,n\}$.

\end{thm}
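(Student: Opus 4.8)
The plan is to reduce the claim to a positive-semidefinite (Loewner) domination statement and then prove that statement by induction over the rank-one updates performed inside Algorithm~\ref{alg_PFB}. The first observation is that the right-hand side of the bound is monotone in the quadratic-form matrix: if $\vect{M}_1\preceq\vect{M}_2$ then $(\vect{\theta}-\vect{\widetilde{\theta}})^T\vect{M}_1(\vect{\theta}-\vect{\widetilde{\theta}})\le(\vect{\theta}-\vect{\widetilde{\theta}})^T\vect{M}_2(\vect{\theta}-\vect{\widetilde{\theta}})$ for every $\vect{\theta},\vect{\widetilde{\theta}}$, and since $z>0$ and $\exp$ is increasing, enlarging $\vect{M}$ in the Loewner order only increases the surrogate. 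Hence it suffices to show that, at the end of the $y$-loop within a given iteration of the $x$-loop, the low-rank matrix satisfies $\vect{V}^T\vect{S}\vect{V}+\vect{D}\succeq\vect{\Sigma}$, where $\vect{\Sigma}$ is the exact matrix produced by Algorithm~\ref{alg_PFB} on the same data point; combined with Theorem~\ref{thm:bound} applied to $\vect{\Sigma}$ this yields the stated inequality with $\vect{M}=\vect{V}^T\vect{S}\vect{V}+\vect{D}$.

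Next I would prove $\vect{V}^T\vect{S}\vect{V}+\vect{D}\succeq\vect{\Sigma}$ by induction on the loop counter $y$. The base case is immediate since both matrices are initialized to $z\vect{I}$. For the inductive step, Algorithm~\ref{alg_PFB} performs $\vect{\Sigma}\leftarrow\vect{\Sigma}+\vect{r}\vect{r}^T$ with $\vect{r}=\sqrt{\beta}\,\vect{l}$, while Algorithm~\ref{alg_LPFB} forms $\vect{V}^T\vect{S}\vect{V}+\vect{r}\vect{r}^T$, which has rank at most $k+1$, diagonalizes its low-rank part, keeps the top $k$ eigenpairs in the new $\vect{V},\vect{S}$, and folds the discarded eigenpair $\rho\,\vect{u}\vect{u}^T$ (with $\rho\ge 0$, $\norm{\vect{u}}_2=1$) into $\vect{D}$. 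The crucial point—and the step I expect to be the main obstacle—is to verify that the particular rule by which Algorithm~\ref{alg_LPFB} augments $\vect{D}$ adds a diagonal matrix $\vect{\Delta}\succeq\rho\,\vect{u}\vect{u}^T$, so that nothing is \emph{lost} when the residual direction is dropped; this is precisely the property that fails in the construction of~\cite{jebara2012majorization}, which discards the residual outright and can therefore undershoot $\vect{\Sigma}$.

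The domination of a rank-one term by a diagonal matrix follows from a Cauchy--Schwarz argument: for any positive weights $w_1,\dots,w_d$ and any vector $\vect{u}$,
\begin{equation}
\vect{u}\vect{u}^T \preceq \Big(\textstyle\sum_{i=1}^d u_i^2/w_i\Big)\,\func{diag}(w_1,\dots,w_d),
\end{equation}
because $\big(\sum_i x_i u_i\big)^2=\big(\sum_i (x_i\sqrt{w_i})(u_i/\sqrt{w_i})\big)^2\le\big(\sum_i x_i^2 w_i\big)\big(\sum_i u_i^2/w_i\big)$ for all $\vect{x}$. With the weights prescribed by Algorithm~\ref{alg_LPFB} (e.g.\ proportional to the current diagonal of $\vect{D}$, or to $|u_i|$), scaling by $\rho$ gives a diagonal $\vect{\Delta}\succeq\rho\,\vect{u}\vect{u}^T$, so the matrix $\vect{\Delta}-\rho\,\vect{u}\vect{u}^T$ is PSD and can be added to both sides of the inductive hypothesis without destroying it:
\begin{equation}
\vect{V}_{\mathrm{new}}^T\vect{S}_{\mathrm{new}}\vect{V}_{\mathrm{new}}+\vect{D}_{\mathrm{new}} = \vect{V}^T\vect{S}\vect{V}+\rho\,\vect{u}\vect{u}^T+(\vect{\Delta}-\rho\,\vect{u}\vect{u}^T)+\vect{D}\;\succeq\;\vect{\Sigma}+\vect{r}\vect{r}^T.
\end{equation}
Finally I would record the structural invariants used implicitly: $\vect{V}$ stays orthonormal because it always collects orthonormal eigenvectors of a symmetric matrix; $\vect{D}$ stays positive definite because it is initialized at $z\vect{I}\succ0$ and the updates only ever add nonnegative diagonal matrices to it; and the eigendecomposition of the rank-$(k{+}1)$ intermediate matrix is well defined. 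With these in place the induction closes, and substituting $\vect{M}=\vect{V}^T\vect{S}\vect{V}+\vect{D}$ into the monotonicity observation of the first paragraph completes the proof.
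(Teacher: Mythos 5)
Your overall strategy is the same as the paper's: reduce the claim to the Loewner domination $\vect{V}^T\vect{S}\vect{V}+\vect{D}\succeq\vect{\Sigma}$ (using monotonicity of the exponential surrogate in the quadratic form together with Theorem~\ref{thm:bound}), prove the domination by induction over the rank-one updates $\vect{\Sigma}\leftarrow\vect{\Sigma}+\vect{r}\vect{r}^T$, and compensate any discarded rank-one direction $c\,\vect{v}\vect{v}^T$ by a diagonal matrix justified through a weighted Cauchy--Schwarz/Jensen inequality (your $\vect{\Delta}$ with weights $w_i\propto|v_i|$ is exactly the paper's Lemma~\ref{lemma1} and its choice $\vect{F}(i)=c|\vect{v}(i)|\sum_j|\vect{v}(j)|$). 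The base case and that final compensation step are fine.

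However, there is a genuine gap in your inductive step: you treat Algorithm~\ref{alg_LPFB} as if it computed the exact eigendecomposition of the rank-$(k{+}1)$ matrix $\vect{V}^T\vect{S}\vect{V}+\vect{r}\vect{r}^T$ and then dropped one eigenpair, so that the only loss to be repaired is the discarded $\rho\,\vect{u}\vect{u}^T$. That is not what the algorithm does. It splits $\vect{r}=\vect{a}+\vect{g}$ with $\vect{a}=\vect{V}^T\vect{V}\vect{r}$ in the row space of $\vect{V}$ and $\vect{g}$ the orthogonal residual, and it only diagonalizes the projected $k\times k$ block $\vect{S}+\vect{V}\vect{r}\vect{r}^T\vect{V}^T$ while handling $\vect{g}/\norm{\vect{g}}$ as a separate candidate direction. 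This captures $\vect{a}\vect{a}^T+\vect{g}\vect{g}^T$ but leaves the indefinite cross terms $\vect{a}\vect{g}^T+\vect{g}\vect{a}^T$ entirely unaccounted for, and $\vect{a}\perp\vect{g}$ does not make these terms vanish as a quadratic form: their nonzero eigenvalues are $\pm\norm{\vect{a}}_2\norm{\vect{g}}_2$ (the paper's Lemma~\ref{lemma3}). This is precisely the flaw in the construction of~\cite{jebara2012majorization} that the theorem repairs, and it is why Algorithm~\ref{alg_LPFB} contains the line $\vect{D}\mathrel{+}=\norm{\vect{g}}\norm{\vect{a}}\vect{I}$: one must dominate the cross terms by $\norm{\vect{a}}_2\norm{\vect{g}}_2\vect{I}$ before any eigenpair is swapped out. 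Your proposal misdiagnoses the original error as ``discarding the residual outright'' (the residual $\vect{g}$ is in fact retained via the eigenvalue swap), so as written your induction would dominate only $\vect{V}^T\vect{S}\vect{V}+\vect{a}\vect{a}^T+\vect{g}\vect{g}^T$, not $\vect{\Sigma}+\vect{r}\vect{r}^T$, and the argument does not close. Inserting the cross-term bound $\vect{a}\vect{g}^T+\vect{g}\vect{a}^T\preceq\norm{\vect{a}}_2\norm{\vect{g}}_2\vect{I}$ into your inductive step, matched to the algorithm's actual update of $\vect{D}$, would fix it and recover the paper's proof.
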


Low-rank variant of Algorithm~\ref{alg_SPFB} is presented in Algorithm~\ref{alg_LSPFB}. Note that all proofs supporting this section are deferred to the Supplement.

\noindent\begin{minipage}{1\textwidth}
\RestyleAlgo{boxruled}
\IncMargin{2em}
\begin{algorithm2e}[H]
\caption{Low-rank Partition Function Bound}
\label{alg_LPFB}
\LinesNumbered
\KwIn{$\widetilde{\vect{\theta}}\in\mathbb{R}^d$, observation $\vect{x}$, $\vect{f}_{\vect{x}}(y) \forall y\in\{1,...,n\}$, rank $k\in\mathbb{N}$}
\KwOut{Low-rank bound parameters: $\vect{V}$, $\vect{S}$, $\vect{D}$, $\vect{\mu}$, $z$}
  $z\to 0^+$, $\vect{S}=0$, $\vect{V}=orthonormal\in\mathbb{R}^{k\times d}$, $\vect{D}=z\vect{I}$, $\vect{\mu}=\vect{0}$\\
  \For(\tcp*[h]{$x$-loop}){each sample $\vect{x}_j$ in batch}{ 
    Init $z_j\gets 0^+$, $\vect{\upsilon}=0$\\
    \For{each label $y\in\{1,2,...,n\}$}{
      $\alpha=\exp(\widetilde{\vect{\theta}}^T\vect{f}_{\vect{x}_j}(y))$; 
  $\vect{r}=\sqrt{\frac{\tanh(\frac{1}{2}\log(\alpha/z))}{2\log(\alpha/z)}}(\vect{f}_{\vect{x}_j}(y)-\vect{\upsilon})$;\\
  
    
      $\vect{p}=\vect{V}\vect{r}$; $\vect{a}=\vect{V}^T\vect{p}$; $\vect{g}=\vect{r}-\vect{a}$, $\vect{S}+\!=\vect{p}\vect{p}^T$\\
      $\vect{Q}^T\vect{A}\vect{Q}=svd(\vect{S})$; $\vect{S}\gets \vect{A}$; $\vect{V}\gets \vect{QV}$; $\vect{D}+\!=\|\vect{g}\|\|\vect{a}\|\vect{I}\in\mathbb{R}^{d\times d}$\\
      $\vect{s}=[\vect{S}(1,1),...,\vect{S}(k,k),\|\vect{g}\|^2]^T$, $\widetilde{k}=\arg\min_{i=1,...,k+1}\vect{s}(i)$\\
      \uIf{$\widetilde{k}\leq k$}{
        $\vect{D}=\vect{D}+\vect{S}(\widetilde{k},\widetilde{k})1^T|\vect{V}(\widetilde{k},\cdot)|diag(|\vect{V}(\widetilde{k},\cdot)|)$\\
        $\vect{S}(\widetilde{k},\widetilde{k})=\|\vect{g}\|^2$; $\vect{g}=\frac{\vect{g}}{\|\vect{g}\|}$; $\vect{V}(\widetilde{k},\cdot)=\vect{g}$
      }
      \uElse{
        $\vect{D}+\!=1^T|\vect{g}|diag(|\vect{g}|)$
      }
      $\vect{\upsilon}+\!=\frac{\alpha}{z_j+\alpha}(\vect{f}_{\vect{x}_j}(y)-\vect{\upsilon})$; $z_j+\!=\alpha$\\
    }
    $\vect{\mu}+\!=\vect{\upsilon}$, $z+\!=z_j$\\
  }
\end{algorithm2e}
\end{minipage}\\

\noindent\begin{minipage}{1\textwidth}
\RestyleAlgo{boxruled}
\IncMargin{1em}
\begin{algorithm2e}[H]
\caption{MLE via Low-rank Stochastic Partition Function Bound (LSPFB)}
\label{alg_LSPFB}
\LinesNumbered
\KwIn{initial parameters $\vect{\theta}_0$, training data set $\{(\vect{x}_j,y_j)\}_{j=1}^T$, features $\vect{f}_{\vect{x}_j}$, learning rates $\eta_t$, regularization coefficient $\lambda$}
\KwOut{Model parameters $\vect{\theta}$}

Set $\vect{\theta}=\vect{\theta}_0$\\
\While{not converged}{
    randomly select a training point $(\vect{x}_t,y_t)$\\
    Get $\vect{V}_t$, $\vect{S}_t$, $\vect{D}_t$, $\vect{\mu}_t$ from $\vect{x}_t$, $\vect{f}_{\vect{x}_t}$, $\vect{\theta}$ via Algorithm \ref{alg_LPFB} (input batch is a single data point $\vect{x}_t$)\\
    $\vect{D}_t=\vect{D}_t+\lambda\vect{I}$; $\vect{\mu}_t=\vect{\mu}_t-\vect{f}_{\vect{x}_t}(y_t)+\lambda\vect{\theta}$\\
    $\vect{\theta}\!\leftarrow\!\vect{\theta}\!-\!\eta_t\left(\vect{D}_t^{-1}-\vect{D}_t^{-1}\vect{V}_t^T\left(\vect{S}_t^{-1}+\vect{V}_t\vect{D}_t^{-1}\vect{V}_t^T\right)^{-1}\vect{V}_t\vect{D}_t^{-1}\right)\vect{\mu}_t$
}
\end{algorithm2e}
\end{minipage}

\vspace{-0.15in}
\section{Experiments}
\vspace{-0.05in}

Experiments were performed on adult\footnote[1]{http://archive.ics.uci.edu/ml/datasets/Adult} ($T=48842$, $n=2$, $d=14$), KMNIST\footnote[2]{https://pytorch.org/docs/stable/torchvision/datasets.html\label{web}} ($T=60000$, $n=10$, $d=784$), and Fashion-MNIST\textsuperscript{\ref{web}} ($T=60000$, $n=10$, $d=784$) data sets. All algorithms were run with mini-batch size equal to $m=1000$. For low-rank methods, we explored the following settings of the rank $k$: $k=1, 5, 10, 100$. Hyperparameters for all methods were chosen to achieve the best test performance. We compare SPFB and LSPFB with SGD for $\ell_2$-regularized logistic regression on the adult, KMNIST, and Fashion-MNIST data sets. Both SPFB and LSPFB show clear advantage over SGD in terms of convergence speed.

\begin{figure}[h]
\vspace{-0.1in}
\floatconts
  {fig}
  {\vspace{-0.35in}\caption{A Comparison of SPFB, LSPFB, and SGD on $l2$-regularized logistic regression problem.}}
  {%
    \subfigure{
      \includegraphics[width=0.24\linewidth]{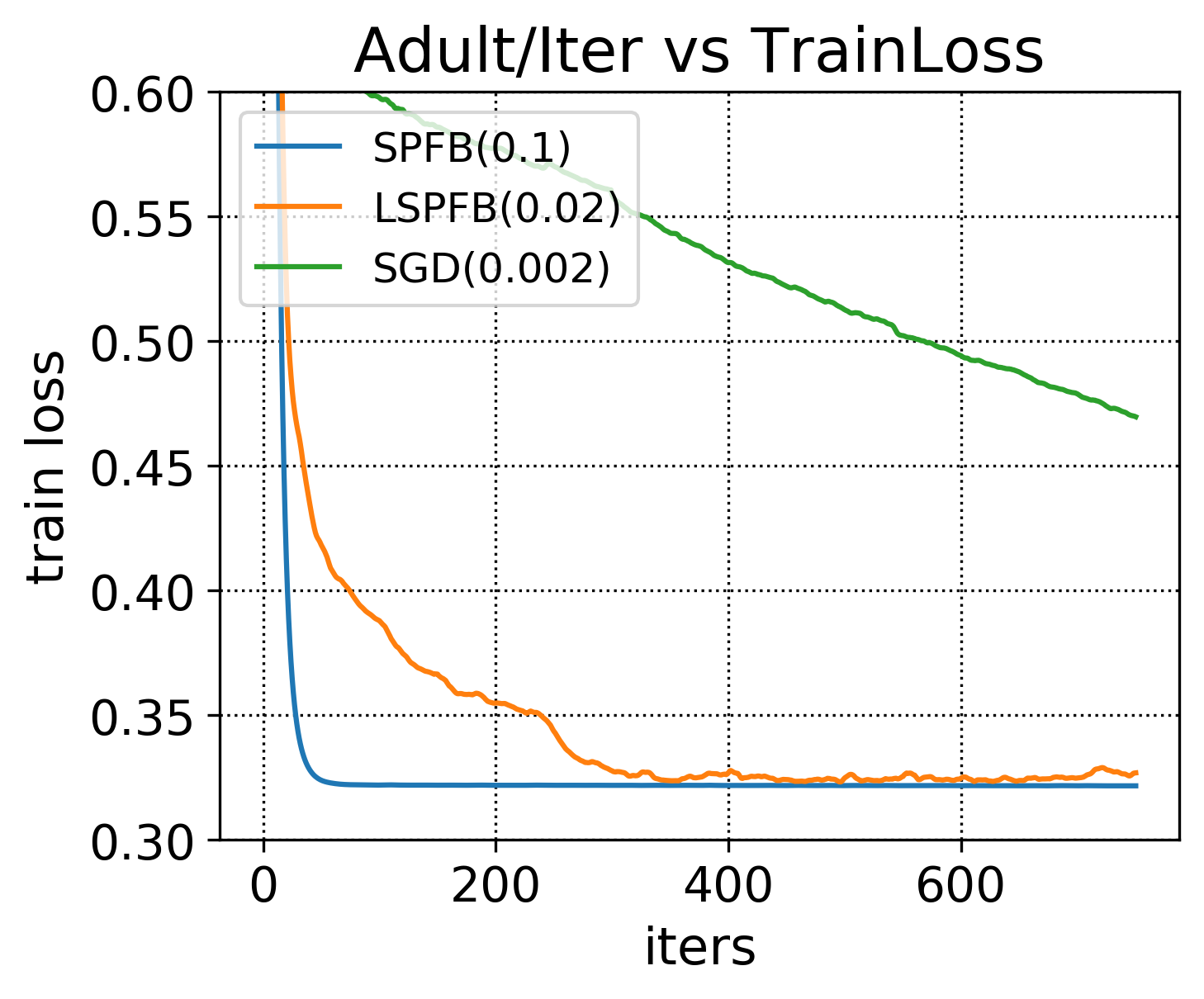}}%
    \hfill
    \subfigure{
      \includegraphics[width=0.23\linewidth]{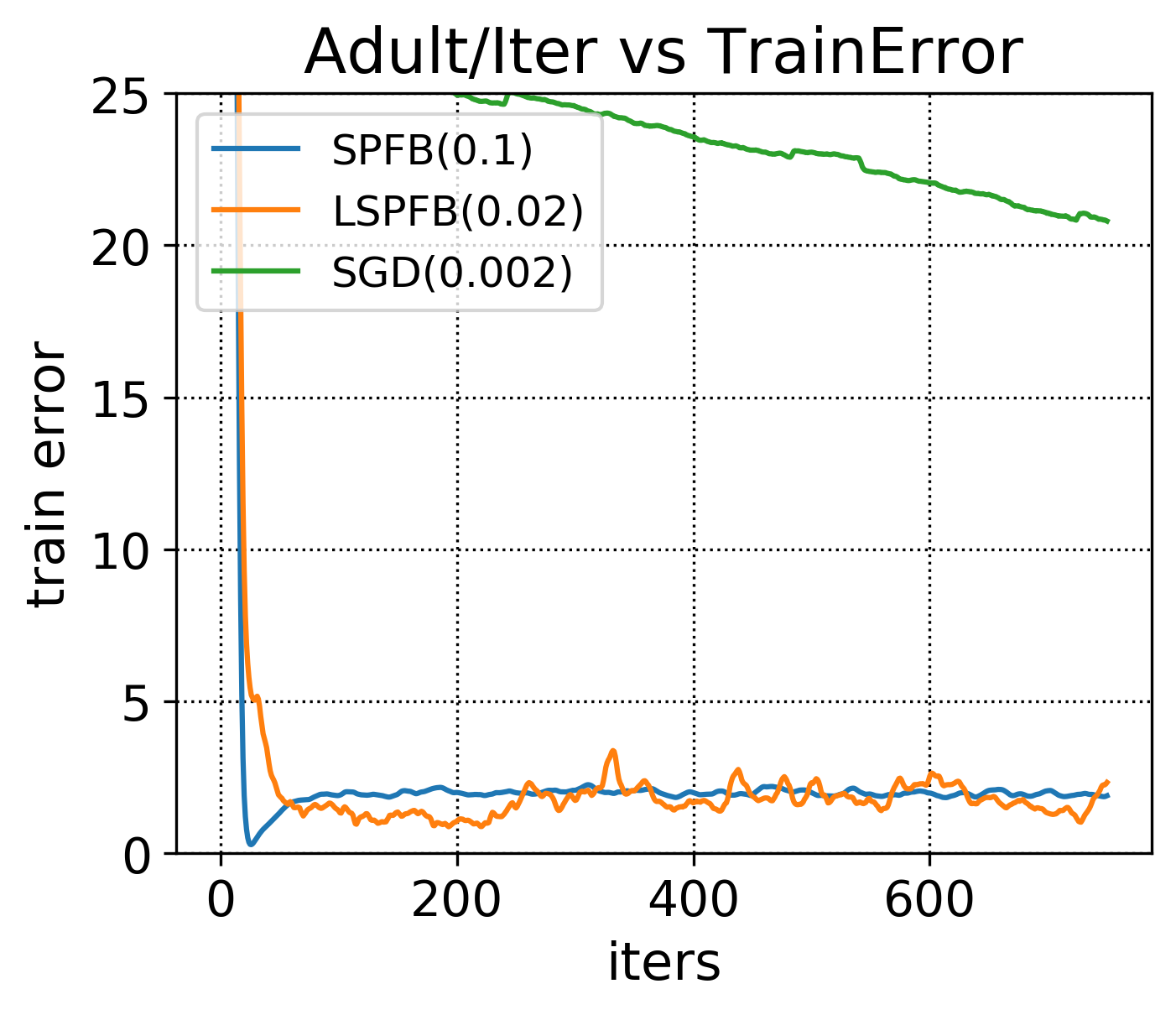}}%
     \hfill
     \subfigure{
      \includegraphics[width=0.24\linewidth]{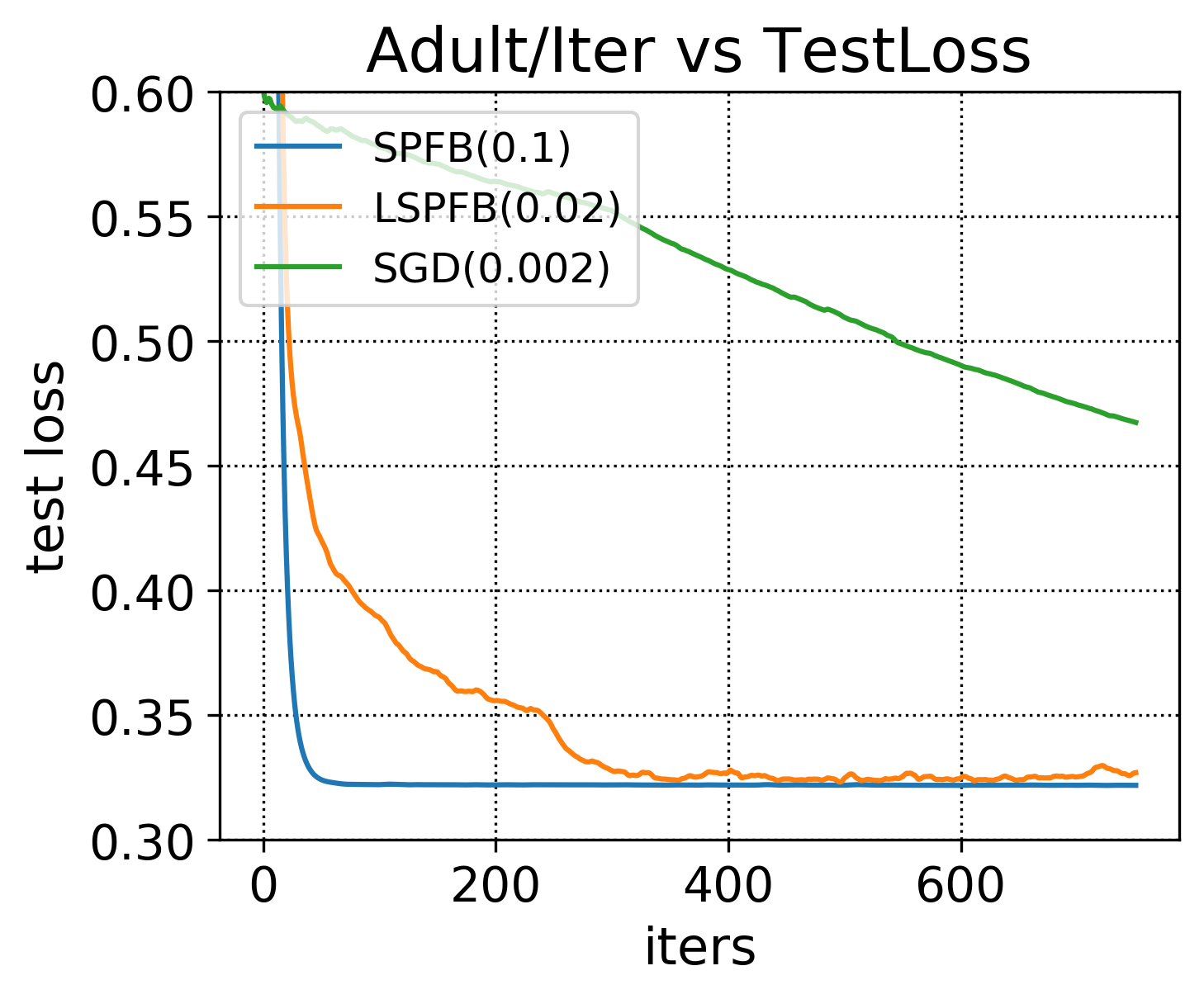}}%
    \hfill
    \subfigure{
      \includegraphics[width=0.23\linewidth]{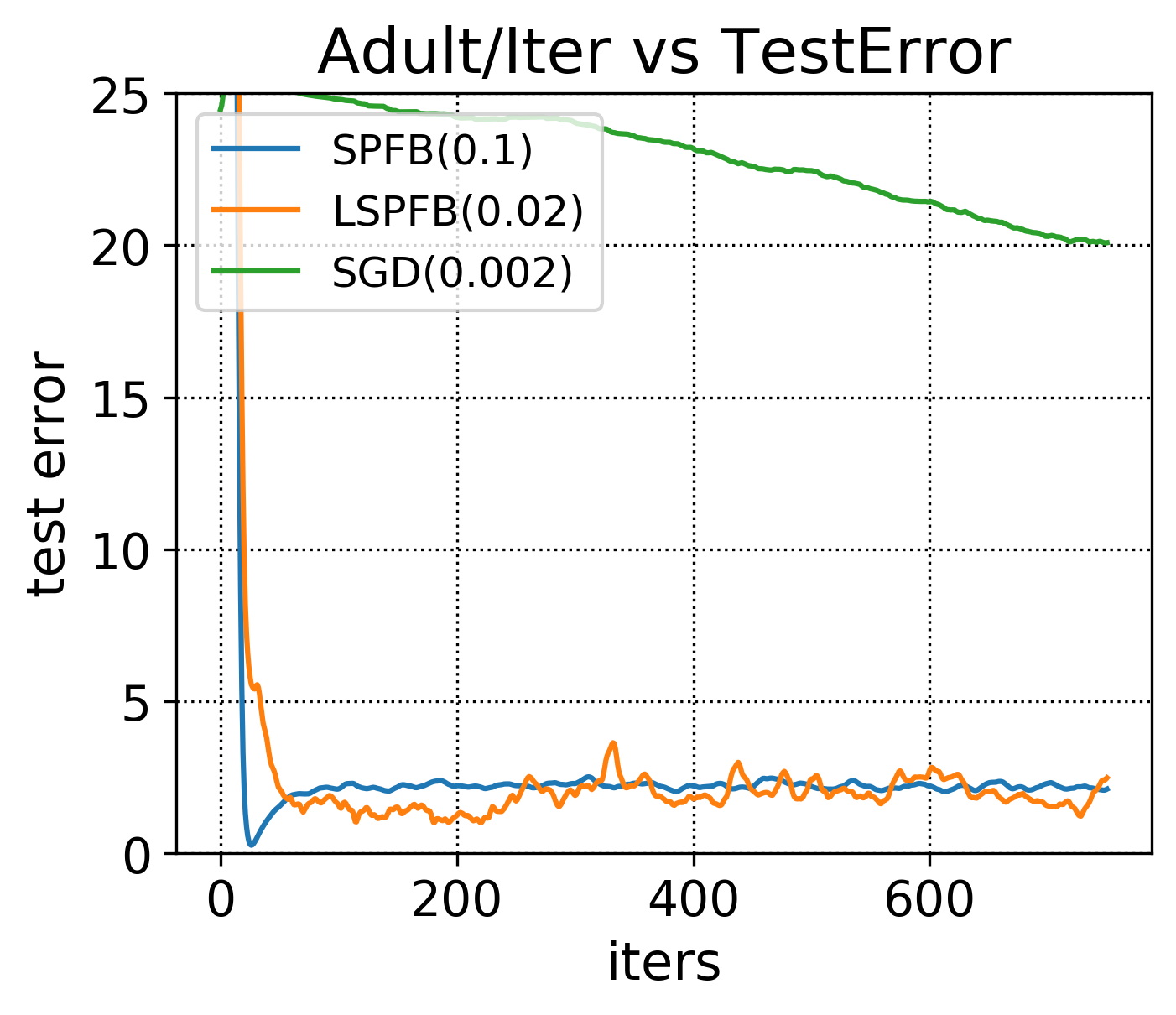}}
     \\
     \subfigure{
      \includegraphics[width=0.23\linewidth]{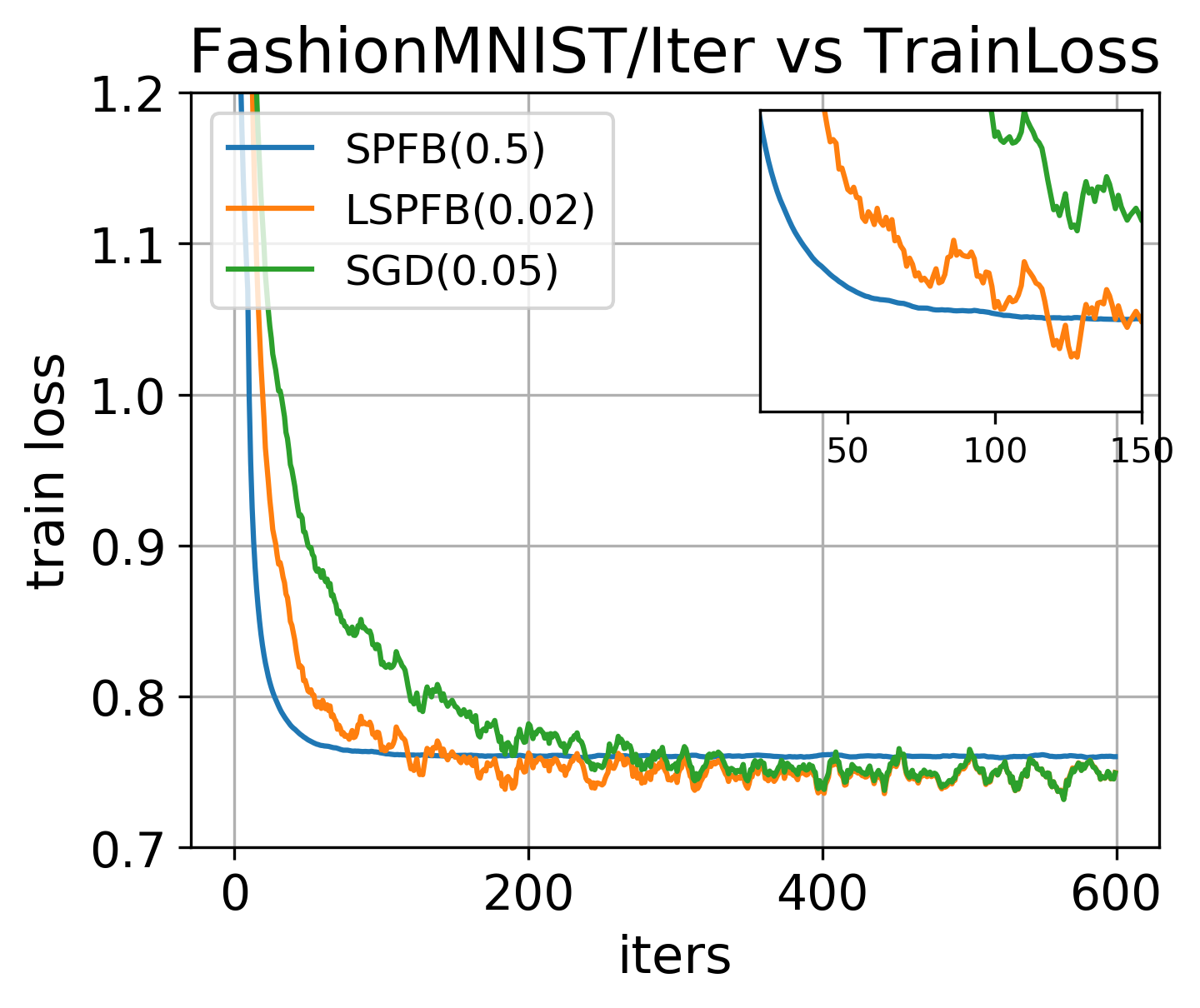}}%
    \hfill
    \subfigure{
      \includegraphics[width=0.23\linewidth]{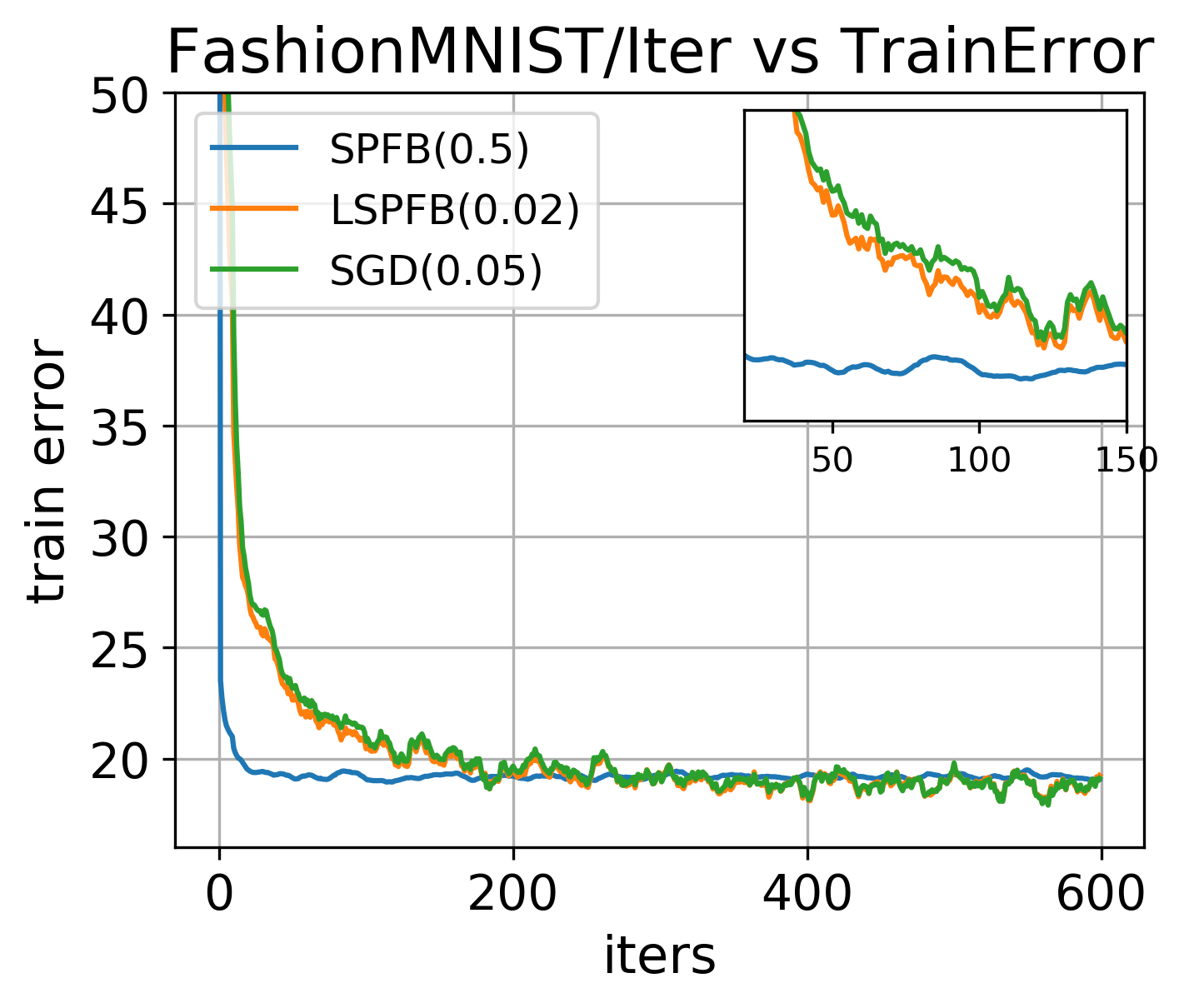}}%
     \hfill
     \subfigure{
      \includegraphics[width=0.23\linewidth]{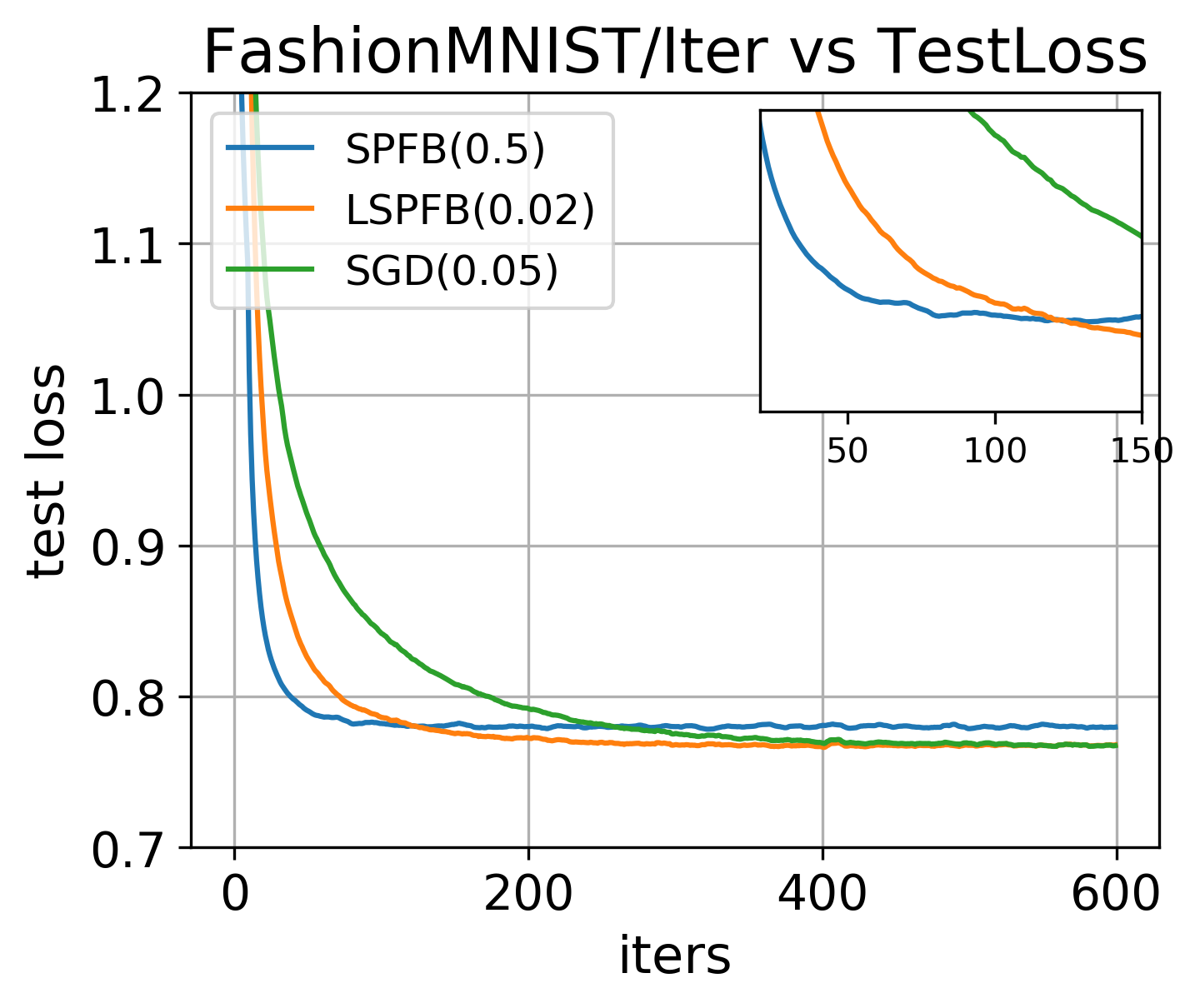}}%
    \hfill
    \subfigure{
      \includegraphics[width=0.23\linewidth]{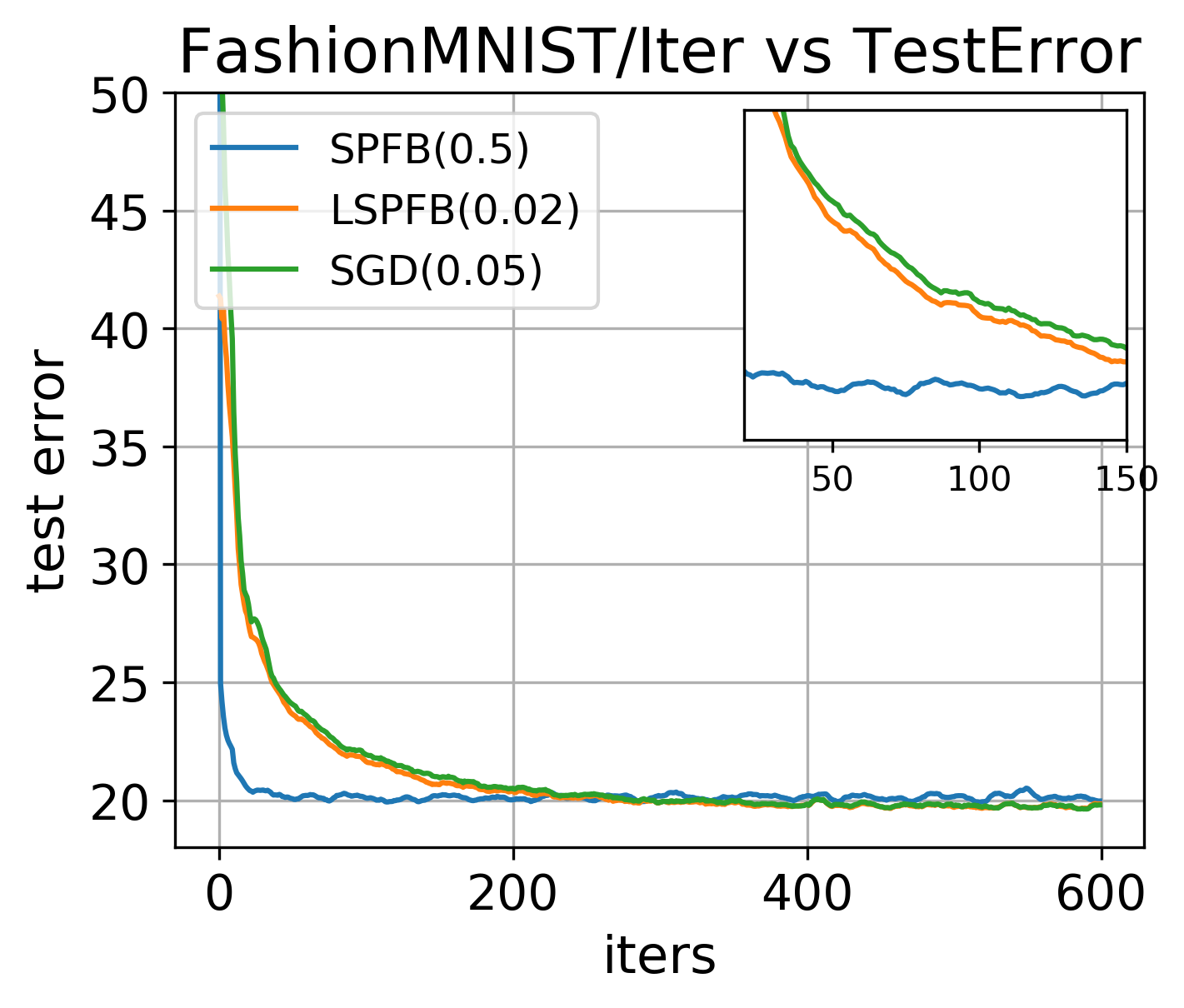}}
    \\
    \subfigure{
      \includegraphics[width=0.23\linewidth]{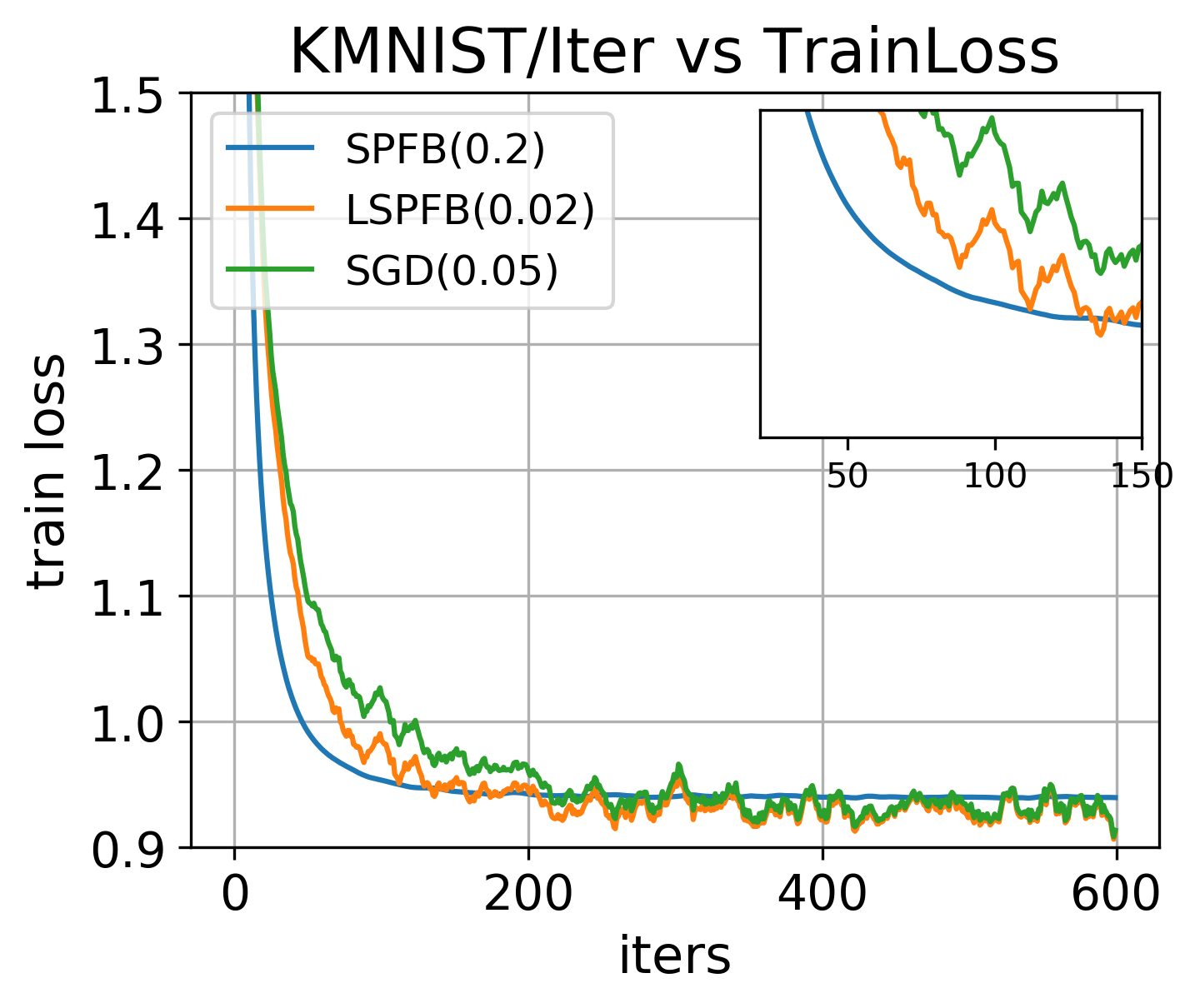}}%
    \hfill
    \subfigure{
      \includegraphics[width=0.23\linewidth]{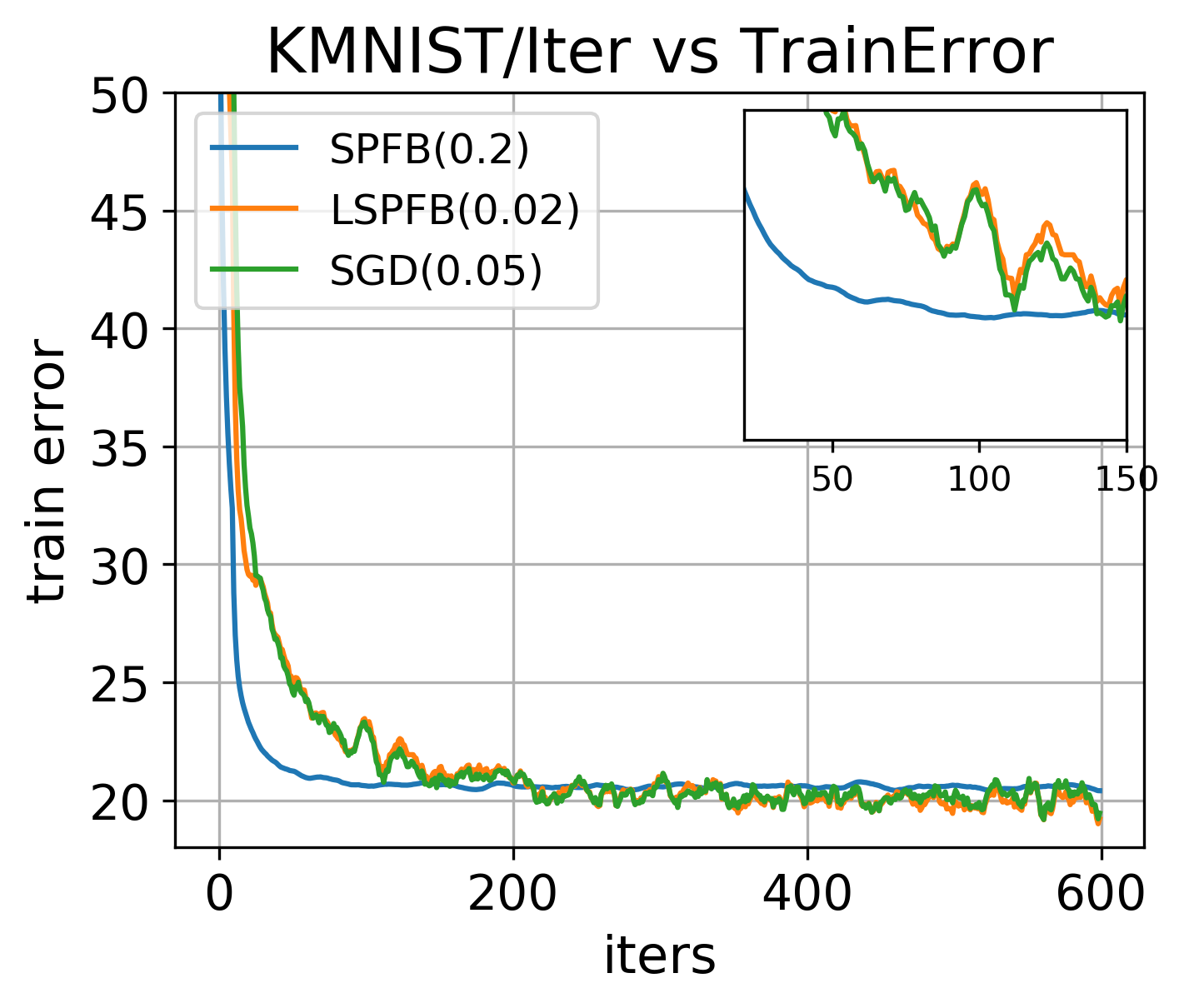}}%
     \hfill
     \subfigure{
      \includegraphics[width=0.23\linewidth]{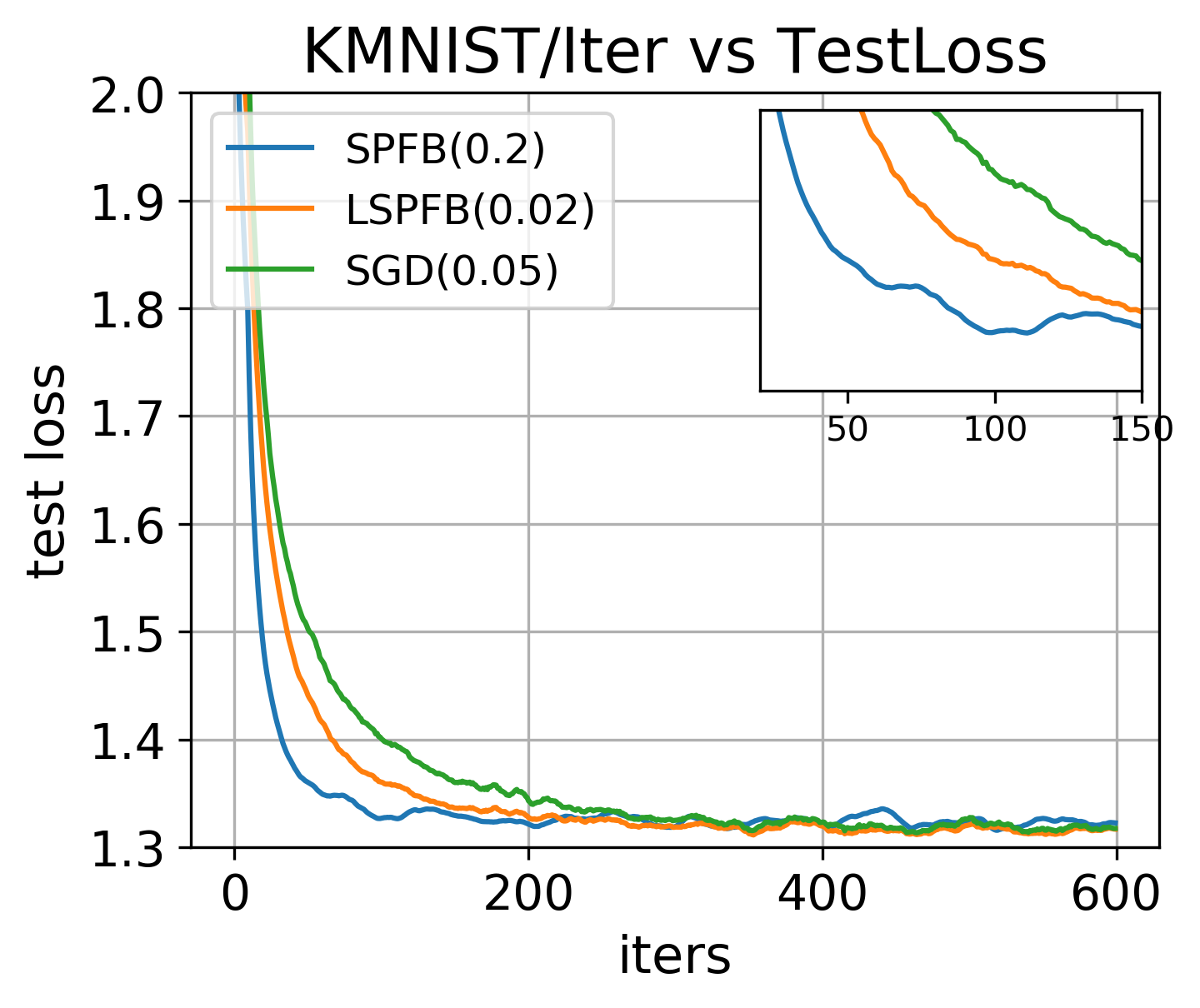}}%
    \hfill
    \subfigure{
      \includegraphics[width=0.23\linewidth]{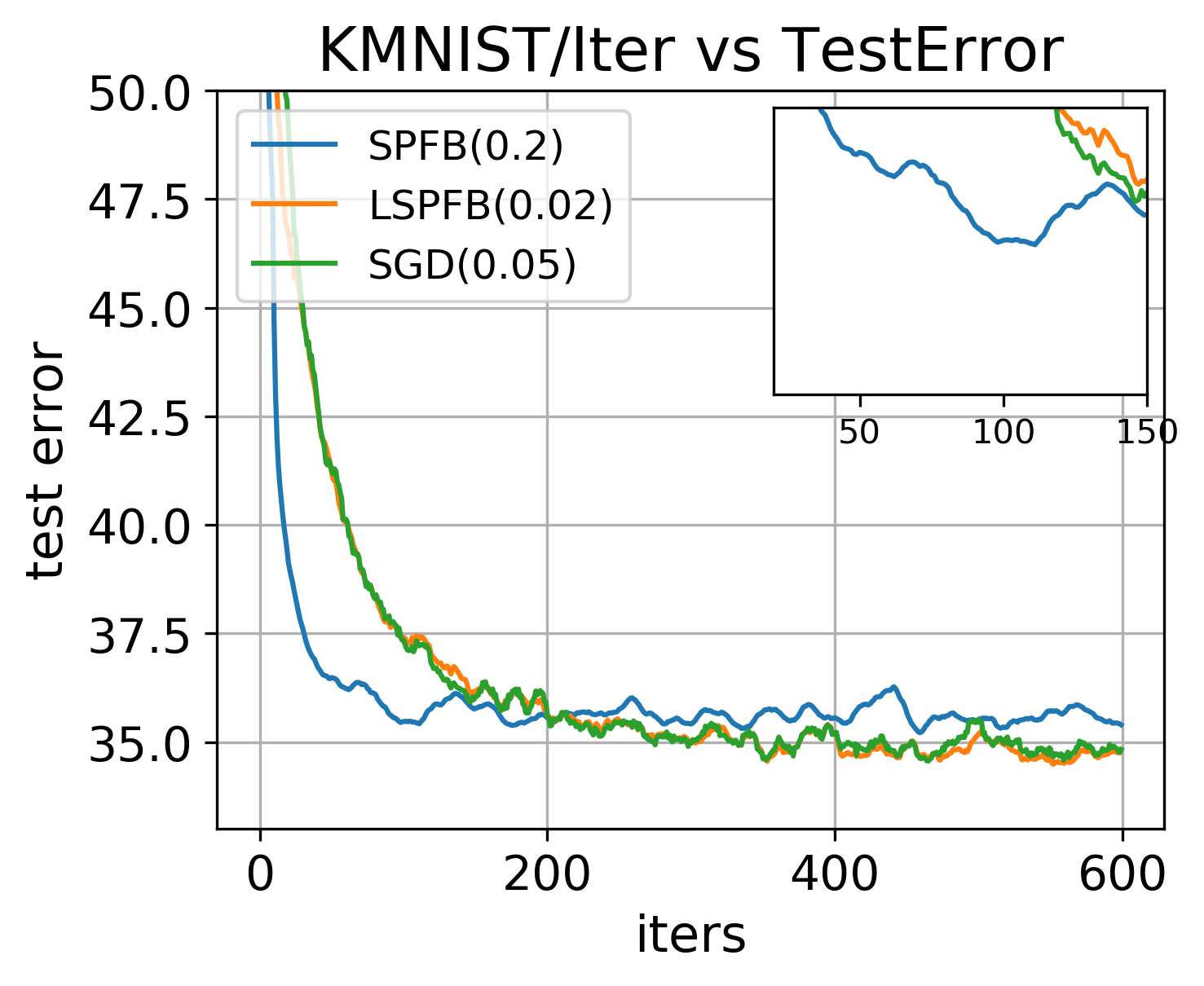}}
  }
  \vspace{-0.35in}
\end{figure}

\section{Discussion}
\vspace{-0.05in}

Here we briefly discuss the future extensions of this work. First, we will analyze whether the batch bound method of Algorithm~\ref{alg_PFB} admits super-linear convergence rate and thus match the convergence rate of quasi-Newton techniques (the existing convergence analysis in the original paper shows linear rate only). On the empirical side, we will investigate applying the bound techniques discussed in this paper to optimize each layer of the network during backpropagation. Per-layer bounds, when combined together, may potentially lead to a universal quadratic bound on the original highly complex deep learning loss function. This approach would open up new possibilities for training deep learning models as it reduces the deep learning non-convex optimization problem to a convex one. We will also investigate applying the developed technique to backpropagation-free~\cite{DBLP:conf/icml/ChoromanskaCKLR19} setting and large-batch~\cite{LARS} training of deep learning models. Finally, we will explore the applicability of the bound techniques in the context of biasing the gradient to explore wide valleys in the non-convex optimization landscape ~\cite{DBLP:journals/corr/ChaudhariCSL17}. In this case enforcing the width of the bound to be sufficiently large should provide a simple mechanism for finding solutions that lie in the flat regions of the landscape.



\bibliography{main}
\newpage
\clearpage
\appendix

\noindent\rule{\textwidth}{1pt}
{\par \Large \bf  \centering{Supplementary material} \par}
\noindent\rule{\textwidth}{1pt}

\section{Proof for Theorem \ref{SPFBsublinear}: sub-linear convergence rate}
The proof in this section inspires by \cite{broyden1973local, byrd2016stochastic,}. We analysis the convergence rate of Stochastic version of Bound Majorization Method. It is easy to know that the objective function 
$$L(\vect{\theta})=\frac{1}{T}\sum_{t=1}^T\left[\log(Z_{\vect{x}_t}(\vect{\theta}))-\vect{\theta}^T\vect{f}_{\vect{x}_t}(\vect{y}_t)\right]+\frac{\lambda}{2}\norm{\vect{\theta}}^2$$
is strongly convex and and twice continuously differentiable, where $Z_{\vect{x}_t}(\vect{\theta})=\sum_{i=1}^n\exp(\vect{\theta}^T\vect{f}_{\vect{x}_t}(i))$. Therfore, we can make the following assumption

\begin{asm} \label{asm1}
  Assume that
  \begin{itemize}
      \item[(1)] The objective function $L$ is twice continuous differentiable.
      \item[(2)] There exists $0<\lambda_1<\lambda_2$ such that for all $\vect{\theta}\in\mathbb{R}^d$,
            \begin{equation}
                \lambda_1\vect{I}\prec\nabla^2L(\vect{\theta})\prec\lambda_2\vect{I}, 
            \end{equation}
  \end{itemize}
\end{asm}

\noindent For stochastic partition function bound (SPFB) method, define $\eta_t$ to be the learning rate, $\vect{\Sigma}_t$ to be the Hessian approximation and
$f(\vect{\theta};\vect{x}_t)=\log(Z_t(\vect{\theta}))-\vect{\theta}^T\vect{f}_{\vect{x}_t}(\vect{y}_t)+\frac{\lambda}{2}\|\vect{\theta}\|^2$ to be the unbiased estimation of objective function $L(\vect{\theta})$ in each iteration, then the update for parameter $\vect{\theta}$ is
\begin{equation}\label{Fupdate}
    \vect{\theta}^{t+1}=\vect{\theta}^t-\eta_t(\vect{\Sigma}_t+\lambda\vect{I})^{-1}\nabla f(\vect{\theta}^t;\vect{x}_t).
\end{equation}
In order to analysis the convergence for SPFB, we add a condition that the variance of estimation $f(\vect{\theta})$ is bounded then construct the following assumption:
\begin{asm}\label{asm2}
  Assume that 
  \begin{itemize}
      \item[(1)] The objective function $L$ is twice continuous differentiable.
      \item[(2)] There exists $0<\lambda_1<\lambda_2$ such that for all $\vect{\theta}\in\mathbb{R}^d$,
        \begin{equation}
          \lambda_1\vect{I}\prec\nabla^2L(\vect{\theta})\prec\lambda_2\vect{I},
        \end{equation}
      \item[(3)]There exists a constant $\sigma$, such that for all $\vect{\theta}\in\mathbb{R}^d$,
       \begin{equation}
          \vect{E}_{\vect{x}}[\norm{f(\vect{\theta};\vect{x})}]^2\leq\sigma^2
       \end{equation}
  \end{itemize}{}
\end{asm}

\begin{lma} \label{lemmaF1}
For matrices $\{\vect{\Sigma}_t\}$ compute in \textbf{Algorithm} \ref{alg_PFB}, there exists $0<\mu_1<\mu_2$ such that for all $\vect{\Sigma}_t$
\begin{equation}
    \mu_1\vect{I}\prec(\vect{\Sigma}_t+\lambda\vect{I})^{-1}\prec\mu_2\vect{I}.
\end{equation}
\end{lma}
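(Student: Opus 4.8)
The goal is to bound the eigenvalues of $(\vect{\Sigma}_t+\lambda\vect{I})^{-1}$ uniformly over all iterations $t$, which is equivalent to establishing $a\vect{I} \preceq \vect{\Sigma}_t + \lambda\vect{I} \preceq b\vect{I}$ for constants $0 < a < b$ independent of $t$, and then setting $\mu_1 = 1/b$, $\mu_2 = 1/a$. The lower bound is immediate: since Algorithm~\ref{alg_PFB} initializes $\vect{\Sigma} = z\vect{I}$ with $z \to 0^+$ and then only adds rank-one PSD terms $\beta \vect{l}\vect{l}^T$ (note $\beta = \tanh(\tfrac12\log(\alpha/z))/(2\log(\alpha/z)) \ge 0$ since $\tanh$ and its argument always share sign, with the removable singularity at $\alpha = z$ giving $\beta \to 1/4$), we have $\vect{\Sigma}_t \succeq 0$, hence $\vect{\Sigma}_t + \lambda\vect{I} \succeq \lambda\vect{I}$, giving the upper bound $(\vect{\Sigma}_t+\lambda\vect{I})^{-1} \preceq \frac{1}{\lambda}\vect{I} =: \mu_2\vect{I}$.

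The substantive direction is the upper bound on $\vect{\Sigma}_t$, equivalently the strict lower bound $(\vect{\Sigma}_t+\lambda\vect{I})^{-1} \succ \mu_1\vect{I}$. The plan is to bound $\|\vect{\Sigma}_t\|$ by controlling each of the $n$ rank-one contributions. First I would observe that $0 \le \beta \le 1/4$ for all arguments (the function $u \mapsto \tanh(u)/(2u)$ attains its maximum $1/2$ at $u = 0$ — wait, let me recompute: $\tanh(\tfrac12 c)/(2c)$ where $c = \log(\alpha/z)$; at $c \to 0$ this is $(\tfrac12 c)/(2c) = 1/4$, and it decreases in $|c|$, so indeed $\beta \le 1/4$). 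Next I would bound $\|\vect{l}\| = \|\vect{f}_{\vect{x}}(y) - \vect{\mu}\|$. Here $\vect{\mu}$ is a running convex-combination-like accumulation of the feature vectors $\vect{f}_{\vect{x}}(y)$ (each step $\vect{\mu} \mathrel{+}= \kappa\,\vect{l}$ with $\kappa = \alpha/(z+\alpha) \in [0,1]$ moves $\vect{\mu}$ toward the current feature vector), so $\vect{\mu}$ stays in the convex hull of $\{\vect{f}_{\vect{x}}(y)\}_{y=1}^n$; therefore $\|\vect{l}\| \le 2\max_y \|\vect{f}_{\vect{x}}(y)\|$. Combining, $\|\vect{\Sigma}_t\| \le \sum_{y=1}^n \beta \|\vect{l}\|^2 \le n \cdot \tfrac14 \cdot 4 \max_y\|\vect{f}_{\vect{x}_t}(y)\|^2 = n\max_y\|\vect{f}_{\vect{x}_t}(y)\|^2$. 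Finally, since the feature map $\vect{f}_{\vect{x}}$ ranges over a finite label set $\{1,\dots,n\}$ and, in the training setting, a finite data set $\{\vect{x}_j\}_{j=1}^T$, the quantity $B := \max_{j}\max_y \|\vect{f}_{\vect{x}_j}(y)\|^2$ is a finite constant, so $\vect{\Sigma}_t \preceq nB\,\vect{I}$ uniformly, giving $(\vect{\Sigma}_t+\lambda\vect{I})^{-1} \succeq \frac{1}{nB+\lambda}\vect{I} =: \mu_1\vect{I}$.

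The main obstacle — really the only nonroutine point — is justifying that $\vect{\mu}$ remains in (or near) the convex hull of the feature vectors throughout the inner loop, so that $\|\vect{l}\|$ is uniformly controlled; this follows by an easy induction on the loop index using $\kappa \in (0,1)$, but it must be stated carefully since $\vect{\mu}$ is updated \emph{after} $\vect{l}$ is formed in each iteration. The strictness of the inequalities ($\mu_1 \vect{I} \prec \cdot \prec \mu_2 \vect{I}$) is then obtained by shrinking $\mu_1$ and enlarging $\mu_2$ slightly, or simply by noting $0 < \frac{1}{nB+\lambda}$ and $\frac{1}{\lambda} < \infty$ already give strict two-sided containment once we pick, say, $\mu_1 = \frac{1}{2(nB+\lambda)}$ and $\mu_2 = \frac{2}{\lambda}$. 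Everything else is bookkeeping.
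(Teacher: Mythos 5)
Your proposal is correct, and it reaches the bound by a partly different route than the paper. On the $\mu_1$ side (the upper bound on $\vect{\Sigma}_t$) you and the paper share the key step $\beta\le 1/4$, but you control $\norm{\vect{l}}=\norm{\vect{f}_{\vect{x}}(y)-\vect{\mu}}$ abstractly, via the observation that the update $\vect{\mu}\leftarrow(1-\kappa)\vect{\mu}+\kappa\,\vect{f}_{\vect{x}}(y)$ keeps $\vect{\mu}$ in the convex hull of $\{\vect{0}\}\cup\{\vect{f}_{\vect{x}}(y)\}_{y=1}^n$ (note the zero initialization, so strictly it is this hull, not the hull of the features alone — your norm bound $\norm{\vect{l}}\le 2\max_y\norm{\vect{f}_{\vect{x}}(y)}$ is unaffected), whereas the paper plugs in an explicit coordinate form $\vect{l}_{i+1}=(-\tfrac{\alpha_1}{z_i}\vect{x},\dots,\tfrac{\alpha_i}{z_i}\vect{x},\vect{x},0,\dots,0)$ that is tied to a particular block-structured feature map; your argument is cleaner and applies to arbitrary bounded features, at the cost of a slightly looser constant ($nB$ versus $\sqrt{n/2}\max_t\norm{\vect{x}_t}^2$). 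On the $\mu_2$ side the routes genuinely diverge: you use only $\vect{\Sigma}_t\succeq 0$, hence $(\vect{\Sigma}_t+\lambda\vect{I})^{-1}\preceq\lambda^{-1}\vect{I}$, while the paper derives a \emph{lower} bound on $\norm{\vect{\Sigma}_t}$ (its largest eigenvalue) and inverts it. Since $\lambda_{\max}\bigl((\vect{\Sigma}_t+\lambda\vect{I})^{-1}\bigr)=1/\lambda_{\min}(\vect{\Sigma}_t+\lambda\vect{I})$ and $\vect{\Sigma}_t$ has rank at most $n\ll d$ (so its smallest eigenvalue is essentially $z_0\to 0^+$), a lower bound on the top eigenvalue does not control the relevant quantity; your $\mu_2\approx 1/\lambda$ is the correct uniform bound (requiring only $\lambda>0$, which the strong-convexity assumption already implies), so your treatment of this direction is actually more robust than the paper's.
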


\begin{proof}
For easy notation, we omit the subscripts and demote $\vect{\Sigma}_t$ as $\vect{\Sigma}$. From \textbf{Algorithm}\ref{alg_PFB}, the formulation for $\vect{\Sigma}$ is
\begin{equation}
    \vect{\Sigma}=z_0\vect{I}+\sum_{i=1}^n\beta_i\vect{l}_i\vect{l}_i^T,
\end{equation}

\noindent where $z_0\to 0^+$ is the initialization of $z$ in algorithm \ref{alg_PFB}. Define $z_i=\sum_{k=1}^i\alpha_k$, we can compute the upper bound of $\beta_i$ 
 \begin{equation}
    \beta_i=\frac{\tanh{(\frac{1}{2}\log(\alpha_i/z_i)})}{2\log(\alpha_i/z_i)}
    =\frac{1}{4}\cdot\frac{\tanh{(\frac{1}{2}\log(\alpha_i/z_i))}}{\frac{1}{2}\log(\alpha_i/z_i)}\leq\frac{1}{4},
 \end{equation}
 we can also conclude that there exits $k$ such that $\frac{1}{n}<\frac{\alpha_k}{z_k}<1$, which imply
 $$\beta_k\geq\frac{\tanh(\frac{1}{2}\log(\frac{1}{n}))}{2\log(\frac{1}{n})}.$$
 Since $\vect{l}_{i+1}=(-\frac{\alpha_1}{z_i}\vect{x},\cdots,\frac{\alpha_i}{z_i}\vect{x},\vect{x},0,\cdots,0)$, where $\vect{x}$ is the current observation, we have
 \begin{equation}
    (1+\frac{1}{n})\|\vect{x}\|^2\leq
    \|\vect{l}_{i+1}\|^2=\frac{\alpha_1^2+\alpha_2^2+\cdots\alpha_i^2}{(\alpha_1+\alpha_2+\cdots\alpha_i)^2}\|\vect{x}\|^2+\|\vect{x}\|^2
    \leq2\|\vect{x}\|^2
 \end{equation}
 Therefore, 
 \begin{align}
    \|\vect{\Sigma}\|^2\geq\beta_k\|\vect{l}_k\|^2\geq\left(1+\frac{1}{n}\right)\frac{\tanh(\frac{1}{2}\log(\frac{1}{n}))}{2\log(\frac{1}{n})}\|\vect{x}\|^2.
 \end{align}
 Based on previous proof, the upperbound for matrix $\vect{\Sigma}$ is
 \begin{align}
    \|\vect{\Sigma}\|^2&\leq z_0^2+\sum_{i=1}^n\beta_i\|\vect{l}_i\vect{l}_i^T\|^2
    \leq z_0^2+\frac{1}{4}\sum_{i=1}^n\|\vect{l}_i\|^4\notag\\
    &\leq z_0^2+\frac{1}{4}\|\vect{x}\|^4\sum_{i=1}^n2
    \leq z_0^2+\sqrt{\frac{n}{2}}\|\vect{x}\|^4.
\end{align}
When $z_0\rightarrow 0^+$, $\|\vect{\Sigma}\|\leq \sqrt{\frac{n}{2}}\|\vect{x}\|^2$.

Denote $\{\vect{x}_t\}$ as the set of all observations. Define $\mu_2=1/\left(\left(1+\frac{1}{n}\right)\frac{\tanh(\frac{1}{2}\log(\frac{1}{n}))}{2\log(\frac{1}{n})}\max_t\|\vect{x}_t\|^2+\lambda\right)$,\\ $\mu_1=1/\left(\sqrt{\frac{n}{2}}\max_t\|\vect{x}_t\|^2+\lambda\right)$,
$$1/\mu_2\leq\|\vect{\Sigma}_t+\lambda\vect{I}\|\leq1/\mu_1\Longrightarrow
    \mu_2\geq\|(\vect{\Sigma}_t+\lambda\vect{I})^{-1}\|\geq\mu_1.$$
    
\end{proof}

\begin{repthm}{SPFBsublinear}
  $\{\vect{\theta}^t\}$ is the sequence of parameters generated by Algorithm \ref{alg_SPFB}. There exists $0<\mu_1<\mu_2$, $0<\lambda_1<\lambda_2$ such that for all iterations $t$,
  \begin{equation}
      \mu_1\vect{I}\prec(\vect{\Sigma}_t+\lambda\vect{I})^{-1}\prec\mu_2\vect{I}\quad\text{ and }\quad \lambda_1\vect{I}\prec\nabla^2L(\vect{\theta})\prec\lambda_2\vect{I},
  \end{equation}
  and there exists a constant $\sigma$, such that for all $\vect{\theta}\in\mathbb{R}^d$,
   \begin{equation}
      \vect{E}_{\vect{x}_t}[\norm{f(\vect{\theta};\vect{x}_t)}]^2\leq\sigma^2.
   \end{equation}
  Define the learning rate in iteration $t$ as 
  $$\eta_t=\eta_0/t\quad\quad\quad\text{where } \eta_0>1/(2\mu_1\lambda_1).$$
  Then for all $t>1$,
  \begin{equation}\label{Fconv}
      \vect{E}[L(\vect{\theta}^t)-L(\vect{\theta}^*)]\leq Q(\eta_0)/t,
  \end{equation}
  where $Q(\eta_0)=\max \left\{\frac{\lambda_2\mu_2^2\eta_0^2\sigma^2}{2(2\mu_1\lambda_1\eta_0-1)},L(\vect{\theta}^1)-L(\vect{\theta^*})\right\}$.
\end{repthm}

\begin{proof}
\begin{align}
     L(\vect{\theta}^{t+1})&=L(\vect{\theta}^t-\eta_t(\vect{\Sigma}_t+\lambda\vect{I})^{-1}\nabla f(\vect{\theta}^t;\vect{x}_t))\notag\\
     &\leq L(\vect{\theta}^t)+\nabla L(\vect{\theta}^t)^T\left(-\eta_t(\vect{\Sigma}_t+\lambda\vect{I})^{-1}\nabla f(\vect{\theta}^t;\vect{x}_t)\right)+\frac{\lambda_2}{2}\norm{\eta_t(\vect{\Sigma}_t+\lambda\vect{I})^{-1}\nabla f(\vect{\theta}^t;\vect{x}_t)}^2\notag\\
     &\leq L(\vect{\theta}^t)-\eta_t\nabla L(\vect{\theta}^t)^T(\vect{\Sigma}_t+\lambda\vect{I})^{-1}\nabla f(\vect{\theta}^t;\vect{x}_t)+\frac{\lambda_2}{2}\eta_t^2\mu_2^2\norm{\nabla f(\vect{\theta}^t;\vect{x}_t)}^2.
  \end{align}\label{propF1_1}
  Taking the expectation, we have
  \begin{align}\label{propF1_2}
    \vect{E}[L(\vect{\theta}^{t+1})]\leq\vect{E}[L(\vect{\theta}^t)]-\eta_t\mu_1\norm{\nabla L(\vect{\theta}^t)}^2+\frac{\lambda_2}{2}\eta_t^2\mu_2^2\sigma^2.
  \end{align}
  Now, we want to correlate $\norm{\nabla L(\vect{\theta}^t)}^2$ with $L(\vect{\theta^t})-L(\vect{\theta}^*)$. For all $\vect{v}\in\mathbb{R}^d$, by condition (2) in Assumption \ref{asm2},
  \begin{align}
    L(\vect{v})&\geq L(\vect{\theta}^t)+\nabla L(\vect{\theta}^t)^T(\vect{v}-\vect{\theta}^t)+\frac{\lambda_1}{2}\|\vect{v}-\vect{\theta}^t\|^2\notag\\
    &\geq L(\vect{\theta}^t)-\nabla L(\vect{\theta}^t)^T\left(\frac{1}{\lambda_1}\nabla L(\vect{\theta}^t)\right)+\frac{\lambda_1}{2}\|\frac{1}{\lambda_1}\nabla L(\vect{\theta}^t)\|^2\notag\\
    &= L(\vect{\theta}^t)-\frac{1}{2\lambda_1}\|\nabla L(\vect{\theta}^t)\|^2,
  \end{align}\label{propF1_3}
    the second inequality comes from computing the minimum of quadratic function $q(\vect{v})=L(\vect{\theta}^k)+\nabla L(\vect{\theta}^k)^T(\vect{v}-\vect{\theta}^k)+\frac{\lambda_1}{2}\|\vect{v}-\vect{\theta}^k\|^2$. Setting $\vect{v}=\vect{\theta^*}$ yields
    \begin{equation}
    2\lambda_1[L(\vect{\theta^t})-L(\vect{\theta}^*)]\leq\|\nabla L(\vect{\theta}^t)\|^2,
    \end{equation}
    which together with formula (\ref{propF1_2}) yields
  \begin{align}
    \vect{E}[L(\vect{\theta}^{t+1})-L(\vect{\theta}^*)]&\leq \vect{E}[L(\vect{\theta}^{t})-L(\vect{\theta}^*)]-2\eta_t\mu_1\lambda_1\vect{E}[L(\vect{\theta}^t)-L(\vect{\theta}^*)]+\frac{\lambda_2}{2}\eta_t^2\mu_2^2\sigma^2\notag\\
    &=(1-2\eta_t\mu_1\lambda_1)\vect{E}[L(\vect{\theta}^t)-L(\vect{\theta}^*)]+\frac{\lambda_2}{2}\eta_t^2\mu_2^2\sigma^2.
  \end{align}
  Define $\phi_t=\vect{E}[L(\vect{\theta}^t)-L(\vect{\theta}^*)]$, $Q(\eta_0)=\max \left\{\frac{\lambda_2\mu_2^2\eta_0^2\sigma^2}{2(2\mu_1\lambda_1\eta_0-1)},L(\vect{\theta}^1)-L(\vect{\theta}^*)\right\}$, when $t=1$, $\phi_1=L(\vect{\theta}^1)-L(\vect{\theta}^*)=Q(\eta_0)/1$ holds. We finish the proof by induction. Assume $\phi_t\leq\frac{Q(\eta_0)}{t}$,
  \begin{align}
    \phi_{t+1}&\leq (1-2\eta_t\mu_1\lambda_1)\phi_t+\frac{\lambda_2}{2}\eta_t^2\mu_2^2\sigma^2\notag\\
    &=(1-\frac{2\eta_0\mu_1\lambda_1}{t})\frac{Q(\eta_0)}{t}+\frac{\lambda_2\eta_0^2\mu_2^2\sigma^2}{2t^2}\notag\\
    &= \frac{t-2\eta_0\mu_1\lambda_1}{t^2}Q(\eta_0)+\frac{\lambda_2\eta_0^2\mu_2^2\sigma^2}{2t^2}\notag\\
    &=\frac{t-1}{t^2}Q(\eta_0)-\frac{2\eta_0\mu_1\lambda_1-1}{t^2}Q(\eta_0)+\frac{\lambda_2\eta_0^2\mu_2^2\sigma^2}{2t^2}\notag\\
    &\leq \frac{t-1}{t^2}Q(\eta_0)-\frac{2\eta_0\mu_1\lambda_1-1}{t^2}\frac{\lambda_2\mu_2^2\eta_0^2\sigma^2}{2(2\mu_1\lambda_1\eta_0-1)}+\frac{\lambda_2\eta_0^2\mu_2^2\sigma^2}{2t^2}\notag\\
    &\leq \frac{Q(\eta_0)}{t+1}
  \end{align}
\end{proof}

\section{Proof for Thm \ref{thm_LR}: Low-rank Bound}

The lower-bound in \cite{jebara2012majorization} is not correct since $\vect{a}\perp\vect{g}$ is not sufficient for $\vect{x}^T(\vect{ag}^T+\vect{ga}^T)\vect{x}=0,\forall\vect{x}$. We loose and fix the bound in this section. We use some proof steps in \cite{jebara2012majorization}.

\begin{lma} \label{lemma1}
$\forall \vect{x}\in\mathbb{R}^d$, $\forall \vect{l}\in\mathbb{R}^{+d}(l\geq0)$, we have
$$\sum_{i=1}^d\vect{x}^2(i)\vect{l}(i)\geq\left(\sum_{i=1}^d\vect{x}(i)\frac{\vect{l}(i)}{\sqrt{\sum_{j=1}^d\vect{l}(j)}}\right)$$
\end{lma}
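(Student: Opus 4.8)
The plan is to read the right-hand side with the square that is evidently intended --- without a square the claimed inequality fails under the scaling $\vect{x}\mapsto t\vect{x}$, since the left side is homogeneous of degree $2$ in $\vect{x}$ while the right side would be homogeneous of degree $1$ --- so that the statement to prove reads
\begin{equation}
\sum_{i=1}^d\vect{x}^2(i)\vect{l}(i)\;\geq\;\left(\sum_{i=1}^d\vect{x}(i)\frac{\vect{l}(i)}{\sqrt{\sum_{j=1}^d\vect{l}(j)}}\right)^{2}=\frac{\bigl(\sum_{i=1}^d\vect{x}(i)\vect{l}(i)\bigr)^2}{\sum_{j=1}^d\vect{l}(j)}.
\end{equation}
First I would dispose of the degenerate case $\sum_{j=1}^d\vect{l}(j)=0$: since $\vect{l}\geq\vect{0}$ this forces $\vect{l}(i)=0$ for all $i$, so both sides vanish under the convention $0/0=0$ and there is nothing to prove. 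Assume henceforth $L:=\sum_{j=1}^d\vect{l}(j)>0$.

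With $L>0$ the inequality is precisely Cauchy--Schwarz. Writing $\vect{x}^2(i)\vect{l}(i)=\bigl(\vect{x}(i)\sqrt{\vect{l}(i)}\bigr)^2$ and applying Cauchy--Schwarz to the vectors $\bigl(\vect{x}(i)\sqrt{\vect{l}(i)}\bigr)_{i=1}^d$ and $\bigl(\sqrt{\vect{l}(i)}\bigr)_{i=1}^d$ gives
\begin{equation}
\Bigl(\sum_{i=1}^d\vect{x}(i)\vect{l}(i)\Bigr)^2=\Bigl(\sum_{i=1}^d\bigl(\vect{x}(i)\sqrt{\vect{l}(i)}\bigr)\sqrt{\vect{l}(i)}\Bigr)^2\leq\Bigl(\sum_{i=1}^d\vect{x}^2(i)\vect{l}(i)\Bigr)\Bigl(\sum_{i=1}^d\vect{l}(i)\Bigr),
\end{equation}
and dividing by $L$ yields the claim. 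Equivalently one may set $p_i:=\vect{l}(i)/L$, note that $(p_i)$ is a probability vector, and observe that the inequality is just $\sum_i p_i\vect{x}^2(i)\geq\bigl(\sum_i p_i\vect{x}(i)\bigr)^2$, i.e. nonnegativity of the variance of the random variable taking value $\vect{x}(i)$ with probability $p_i$ (Jensen's inequality for $t\mapsto t^2$).

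There is essentially no obstacle here: the only points needing care are the missing square in the displayed statement and the $L=0$ corner case together with the convention for the fraction. For the use of this lemma in the proof of Theorem~\ref{thm_LR} I would additionally record its absolute-value corollary: applying it with $\vect{l}=|\vect{g}|$ and with $\vect{x}(i)$ replaced by $|\vect{x}(i)|$, and using $\bigl|\sum_i\vect{g}(i)\vect{x}(i)\bigr|\leq\sum_i|\vect{g}(i)|\,|\vect{x}(i)|$, gives $\bigl(\vect{g}^{T}\vect{x}\bigr)^2\leq\bigl(\mathbf{1}^{T}|\vect{g}|\bigr)\,\vect{x}^{T}\operatorname{diag}(|\vect{g}|)\,\vect{x}$, i.e. $\vect{g}\vect{g}^{T}\preceq\bigl(\mathbf{1}^{T}|\vect{g}|\bigr)\operatorname{diag}(|\vect{g}|)$; this is exactly how the rank-one residual $\vect{g}\vect{g}^{T}$ is absorbed into the diagonal matrix $\vect{D}$ in Algorithm~\ref{alg_LPFB} while preserving the partition-function upper bound.
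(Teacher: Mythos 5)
Your proof is correct and, in substance, the same as the paper's: the paper applies Jensen's inequality with weights $a_i=\vect{l}(i)/\sum_j\vect{l}(j)$ and $f(x)=x^2$, which is exactly the probability-vector/variance reformulation you give (and your Cauchy--Schwarz version is the same inequality in different clothing). You also correctly flag the missing square on the right-hand side, a typo the paper's own statement and proof share, and your handling of the $\sum_j\vect{l}(j)=0$ corner case is a minor tidiness the paper omits.
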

\begin{proof}
By Jensen's inequality, if $f:\mathbb{R}\rightarrow\mathbb{R}$ convex, $\{a_i\}_{i=1}^d$ satisfies $\sum_{i=1}^da_i=1$, then
$$\sum_{i=1}^da_if(\vect{x}(i))\geq f\left(\sum_{i=1}^da_i\vect{x}(i)\right)$$
Set $a_i=\frac{\vect{l}(i)}{\sum_{i=1}^d\vect{l}(i)}$,$f(x)=x^2$,
\begin{align*}
    \sum_{i=1}^d\vect{x}(i)^2\frac{\vect{l}(i)}{\sum_{i=1}^d\vect{l}(i)}&\geq \left(\sum_{i=1}^d\vect{x}(i)\frac{\vect{l}(i)}{\sum_{i=1}^d\vect{l}(i)}\right)^2\\
    \sum_{i=1}^d\vect{x}^2(i)\vect{l}(i)&\geq\left(\sum_{i=1}^d\vect{x}(i)\frac{\vect{l}(i)}{\sqrt{\sum_{j=1}^d\vect{l}(j)}}\right)
\end{align*}
\end{proof}

\begin{lma} \label{lemma2}
If $\vect{a},\vect{g}\in\mathbb{R}^{n}$ non-zero, then $rank(\vect{a}\vect{g}^T)=1$.
\end{lma}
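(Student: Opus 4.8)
The plan is to use the fact that $\vect{a}\vect{g}^T$ is an outer product, hence a rank-one matrix in the literal sense. First I would write out its entries, $(\vect{a}\vect{g}^T)_{ij}=\vect{a}(i)\vect{g}(j)$, so that the $j$-th column of $\vect{a}\vect{g}^T$ equals $\vect{g}(j)\,\vect{a}$. Consequently the column space of $\vect{a}\vect{g}^T$ is contained in $\func{span}\{\vect{a}\}$, which has dimension at most one, and therefore $\func{rank}(\vect{a}\vect{g}^T)\leq 1$.

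For the reverse inequality I would invoke the hypotheses: since $\vect{g}\neq\vect{0}$, there is an index $j$ with $\vect{g}(j)\neq 0$, and since $\vect{a}\neq\vect{0}$ the $j$-th column $\vect{g}(j)\,\vect{a}$ is a nonzero vector, so $\vect{a}\vect{g}^T\neq\vect{0}$ and hence $\func{rank}(\vect{a}\vect{g}^T)\geq 1$. Combining the two bounds gives $\func{rank}(\vect{a}\vect{g}^T)=1$.

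This is an elementary fact and I do not anticipate any genuine obstacle. The only subtlety is the degenerate case, which is handled precisely by the assumption that both $\vect{a}$ and $\vect{g}$ are nonzero: if either vanished, the product would be the zero matrix and the rank would be $0$. An equivalent one-line argument is $1\leq\func{rank}(\vect{a}\vect{g}^T)\leq\min\{\func{rank}(\vect{a}),\func{rank}(\vect{g}^T)\}=1$, with the lower bound again coming from non-vanishing; I would nonetheless present the column-space version since it is more self-contained and makes the role of the hypotheses transparent.
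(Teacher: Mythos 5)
Your proof is correct, and it is actually more careful than the paper's. The paper disposes of this lemma in one line by writing $rank(\vect{a}\vect{g}^T)=\min\{rank(\vect{a}),rank(\vect{g})\}=1$, which is exactly the "equivalent one-line argument" you mention at the end; note, however, that for a product of matrices one only has the inequality $rank(\vect{a}\vect{g}^T)\leq\min\{rank(\vect{a}),rank(\vect{g})\}$ in general, so the paper's stated equality silently uses the fact that $\vect{a}\vect{g}^T\neq\vect{0}$ when both vectors are nonzero. Your column-space presentation makes both directions explicit: the columns $\vect{g}(j)\vect{a}$ lie in $\mathrm{span}\{\vect{a}\}$, giving the upper bound, and the existence of an index $j$ with $\vect{g}(j)\neq 0$ together with $\vect{a}\neq\vect{0}$ gives a nonzero column, hence the lower bound. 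So your route buys a self-contained argument that makes the role of the nonzero hypotheses transparent, whereas the paper's version is shorter but leans on an identity that, read literally, is only an inequality; either is acceptable for a fact this elementary.
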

\begin{proof}
 $$rank(\vect{a}\vect{g}^T)=\min\{rank(\vect{a}),rank(\vect{g})\}=1$$
\end{proof}

\begin{lma} \label{lemma3}
Let $\vect{a},\vect{g}\in\mathbb{R}^n$ non-zero, matrix $\vect{A}=\vect{a}\vect{g}^T+\vect{g}\vect{a}^T\in\mathbb{R}^{n\times n}$. If $\vect{a}\perp \vect{g}$ and $\vect{A}\neq\vect{0}$, then \vect{A} has exactly 2 opposite eigenvalues $\pm\|\vect{a}\|_2\|\vect{g}\|_2$. 
\end{lma}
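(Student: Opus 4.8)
The plan is to exploit the fact that $\vect{A}$ is symmetric with range contained in the two-dimensional subspace $V=\operatorname{span}\{\vect{a},\vect{g}\}$, so all of its nontrivial action is captured by a $2\times2$ block. First I would record the two elementary computations $\vect{A}\vect{a}=(\vect{g}^T\vect{a})\vect{a}+\|\vect{a}\|^2\vect{g}=\|\vect{a}\|^2\vect{g}$ and $\vect{A}\vect{g}=\|\vect{g}\|^2\vect{a}$, where the hypothesis $\vect{a}\perp\vect{g}$ kills the cross terms $\vect{g}^T\vect{a}$ and $\vect{a}^T\vect{g}$. These two identities already show $\vect{a},\vect{g}\in\operatorname{range}(\vect{A})$, hence $\operatorname{range}(\vect{A})=V$; since $\vect{A}$ is symmetric, $\ker(\vect{A})=(\operatorname{range}\vect{A})^{\perp}=V^{\perp}$ has dimension $n-2$, which contributes the eigenvalue $0$ with multiplicity $n-2$.

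Next I would produce the two remaining eigenpairs explicitly. Using the computations above, the vectors $\vect{v}_{\pm}=\|\vect{g}\|\,\vect{a}\pm\|\vect{a}\|\,\vect{g}$ satisfy $\vect{A}\vect{v}_{\pm}=\|\vect{g}\|\,\vect{A}\vect{a}\pm\|\vect{a}\|\,\vect{A}\vect{g}=\|\vect{a}\|\|\vect{g}\|\big(\|\vect{a}\|\vect{g}\pm\|\vect{g}\|\vect{a}\big)=\pm\|\vect{a}\|_2\|\vect{g}\|_2\,\vect{v}_{\pm}$, and both $\vect{v}_{+}$ and $\vect{v}_{-}$ are nonzero because $\vect{a},\vect{g}$ are linearly independent (being orthogonal and nonzero). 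Thus $\pm\|\vect{a}\|_2\|\vect{g}\|_2$ are genuine eigenvalues, and $\vect{A}\neq\vect{0}$ forces $\|\vect{a}\|_2\|\vect{g}\|_2\neq0$, so they are distinct and nonzero.

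There is essentially no hard step here; the only thing to be careful about is the bookkeeping that guarantees no further nonzero eigenvalues appear. This follows immediately from $\operatorname{rank}(\vect{A})\le\operatorname{rank}(\vect{a}\vect{g}^T)+\operatorname{rank}(\vect{g}\vect{a}^T)\le2$ (equivalently, from $\ker(\vect{A})=V^{\perp}$ above): the eigenvalues $0$ (multiplicity $n-2$) and $\pm\|\vect{a}\|_2\|\vect{g}\|_2$ already exhaust all $n$ of them, so the only nonzero eigenvalues of $\vect{A}$ are exactly the opposite pair $\pm\|\vect{a}\|_2\|\vect{g}\|_2$. An equally short alternative is to diagonalize the matrix of $\vect{A}$ restricted to $V$ in the orthonormal basis $\{\vect{a}/\|\vect{a}\|,\vect{g}/\|\vect{g}\|\}$, which by the same computations is the off-diagonal matrix with both off-diagonal entries equal to $\|\vect{a}\|\|\vect{g}\|$, whose characteristic polynomial $\lambda^2-\|\vect{a}\|_2^2\|\vect{g}\|_2^2$ has roots $\pm\|\vect{a}\|_2\|\vect{g}\|_2$.
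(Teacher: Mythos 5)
Your proof is correct, and it takes a genuinely different route from the paper's. The paper argues indirectly: it bounds $\operatorname{rank}(\vect{A})\le 2$ via Lemma \ref{lemma2}, rules out rank one by the trace computation $\operatorname{trace}(\vect{A})=2\vect{a}^T\vect{g}=0$ (which would force $\vect{A}=\vect{0}$), and then, in the rank-two case, combines $\lambda_1+\lambda_2=\operatorname{trace}(\vect{A})=0$ with the annihilating identity $\vect{A}(\vect{A}^2-\|\vect{a}\|_2^2\|\vect{g}\|_2^2\vect{I})=\vect{0}$, obtained from the computation $\vect{A}^3=\|\vect{a}\|_2^2\|\vect{g}\|_2^2\vect{A}$, to pin down $\lambda_{1,2}=\pm\|\vect{a}\|_2\|\vect{g}\|_2$. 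You instead exhibit the spectrum constructively: from $\vect{A}\vect{a}=\|\vect{a}\|^2\vect{g}$ and $\vect{A}\vect{g}=\|\vect{g}\|^2\vect{a}$ you get explicit eigenvectors $\vect{v}_{\pm}=\|\vect{g}\|\vect{a}\pm\|\vect{a}\|\vect{g}$ with eigenvalues $\pm\|\vect{a}\|_2\|\vect{g}\|_2$, and the observation $\ker(\vect{A})=\operatorname{span}\{\vect{a},\vect{g}\}^{\perp}$ (or the same rank bound) shows no other nonzero eigenvalues exist. Your version is shorter and more transparent: it avoids the case split on the rank, avoids reasoning through an annihilating polynomial, and makes the hypothesis $\vect{A}\neq\vect{0}$ automatic (indeed $\vect{A}\vect{a}=\|\vect{a}\|^2\vect{g}\neq\vect{0}$), while the paper's argument has the mild virtue of not requiring one to guess eigenvectors and of reusing its rank lemma. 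Either proof suffices for the way the lemma is used later (only the largest eigenvalue $\|\vect{a}\|_2\|\vect{g}\|_2$ matters in the bound on $\vect{x}^T(\vect{a}\vect{g}^T+\vect{g}\vect{a}^T)\vect{x}$).
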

\begin{proof}
 Because $rank(\vect{ag}^T)=rank(\vect{ga}^T)=1$ (By \textbf{Lemma} \ref{lemma2}),
 $$rank(\vect{A})=rank(\vect{ag}^T+\vect{ga}^T)\leq rank(\vect{ag}^T)+rank(\vect{ga}^T)= 2.$$
 
 \begin{itemize}
     \item[(a)] $rank(\vect{A})=1$, there is only one non-zero eigenvalue $\lambda$.
   $$\lambda=trace(\vect{A})=\sum_{i=1}^n\vect{A}_{ii}=\sum_{i=1}^n\vect{a}_i\vect{g}_i+\vect{a}_i\vect{g}_i=2\vect{a}^T\vect{g}=0.$$
   Therefore, all eigenvalues of matrix $\vect{A}$ are 0 and $\vect{A}=\vect{0}$, which contradicts the condition $\vect{A}\neq0$.
    \item[(b)] $rank(\vect{A})=2$, assume $\lambda_1, \lambda_2$ are 2 non-zero eigenvalues of $\vect{A}$.
    \begin{equation}
        \lambda_1+\lambda_2=trace(\vect{A})=0\label{opposite}
    \end{equation}
    Without loss of generality, assume $\lambda_1>0$, then the characteristic polynomial of matrix $A$ is
    $$\lambda^{n-2}(\lambda-\lambda_1)(\lambda-\lambda_2)=\lambda^{n-2}(\lambda^2-\lambda_1^2),$$
    and matrix $A$ satisfies
    \begin{align}
      \vect{A}^{n-2}(\vect{A}^2-\lambda_1^2\vect{I})=\vect{0}\label{eigenpoly1}
    \end{align}
    Because
    \begin{align*}
      \vect{A}^3=\vect{A}^2\vect{A}&=(\vect{ag}^T\vect{ag}^T+\vect{ag}^T\vect{ga}^T+\vect{ga}^T\vect{ag}^T+\vect{ga}^T\vect{ga}^T)(\vect{ag}^T+\vect{ga}^T)\notag\\
      &=(\|\vect{a}\|_2^2\vect{gg}^T+\|\vect{g}\|_2^2\vect{aa}^T)(\vect{ag}^T+\vect{ga}^T)\notag\\
      &=\|\vect{a}\|_2^2\|\vect{g}\|_2^2(\vect{ag}^T+\vect{ga}^T)\notag\\
      &=\|\vect{a}\|_2^2\|\vect{g}\|_2^2\vect{A},\notag\\
    \end{align*}
    We can conclude that
    \begin{equation}
        \vect{A}(\vect{A}^2-\|\vect{a}\|_2^2\|\vect{g}\|_2^2\vect{I})=0\Longrightarrow \vect{A}^{n-2}(\vect{A}^2-\|\vect{a}\|_2^2\|\vect{g}\|_2^2\vect{I})=0.\label{eigenpoly2}
    \end{equation}
    From formula $(\ref{opposite})$, $(\ref{eigenpoly1})$ and $(\ref{eigenpoly2})$ we can conclude that the eigenvalues $\lambda_1=\|\vect{a}\|_2\|\vect{g}\|_2$ and $\lambda_2=-\|\vect{a}\|_2\|\vect{g}\|_2$.
 \end{itemize}
\end{proof}

\begin{repthm}{thm_LR}
 Let $Z_{\vect{x}}(\vect{\theta})=\sum_{y=1}^n\exp\left(\vect{\theta}^T\vect{f}_{\vect{x}}(y)\right)$. In each iteration of the $x$-loop in Algorithm \ref{alg_LPFB} finds $z,\vect{\mu},\vect{V},\vect{S},\vect{D}$ such that
  \vspace{-0.15in}
  \begin{equation}\label{LPFB_1}
  Z_{\vect{x}}(\vect{\theta})\leq z\exp\left(\frac{1}{2}(\vect{\theta}-\vect{\widetilde{\theta}})^T(\vect{V}^T\vect{S}\vect{V} + \vect{D})(\vect{\theta}-\vect{\widetilde{\theta}})+(\vect{\theta}-\vect{\widetilde{\theta}})^T\vect{\mu}\right)
  \end{equation}
  \vspace{-0.2in}
\end{repthm}

\begin{proof}
Define $Z_{\vect{x}}^i(\vect{\theta})=\sum_{y=1}^i\exp(\vect{\theta}^T\vect{f}_{\vect{x}}(y))$, then the partition function is represented as $Z_{\vect{x}}(\vect{\theta})=Z_{\vect{x}}^n(\vect{\theta})$, where n is the number of labels. From proof of Thm \ref{thm_PFB} (details included in \cite{jebara2012majorization}), we already find a sequence of matrix $\{\vect{\Sigma}_i\}_{i=1}^n$, vector $\{\vect{\mu}\}_{i=1}^n$ and constant $\{z_i\}_{i=1}^n$ such that:

\begin{equation}\label{PFB_1}
    Z_{\vect{x}}^i(\vect{\theta})\leq z_i\exp\left(\frac{1}{2}(\vect{\theta}-\vect{\widetilde{\theta}})^T\vect{\Sigma}_i(\vect{\theta}-\vect{\widetilde{\theta}})+(\vect{\theta}-\vect{\widetilde{\theta}})^T\vect{\mu}_i\right),
\end{equation}

\noindent where the terminate terms $\vect{\Sigma}=\vect{\Sigma}_n$, $\vect{\mu}=\vect{\mu}_n$, $z=z_n$ are the output of algorithm $\ref{alg_PFB}$. If we could find sequences of $\{\vect{V}_i\}_{i=1}^n\subset\mathbb{R}^{k\times d}$ (orthnormal), $\{\vect{S}_i\}_{i=1}^n\subset\mathbb{R}^{k\times k}$ and $\{\vect{D}_i\}_{i=1}^n\subset\mathbb{R}^{d\times d}$ (diagonal) upper-bounds matrices $\{\vect{\Sigma}_i\}_{i=1}^n\subset\mathbb{R}^{d\times d}$ as

\begin{equation}\label{sigma_bound}
\vect{x}^T\vect{\Sigma}_i\vect{x}\leq \vect{x}^T(\vect{V}_i^T\vect{S}_i\vect{V}_i+\vect{D}_i)\vect{x}\quad\forall\vect{x}\in\mathbb{R}^d,
\end{equation}
the upper-bound \ref{PFB_1} of partition function over only $i$ labels can be renewed as
\begin{equation}\label{LPFB_2}
    Z_{\vect{x}}^i(\vect{\theta})\leq z_i\exp\left(\frac{1}{2}(\vect{\theta}-\vect{\widetilde{\theta}})^T\left(\vect{V}_i^T\vect{S}_i\vect{V}_i+\vect{D}_i\right)(\vect{\theta}-\vect{\widetilde{\theta}})+(\vect{\theta}-\vect{\widetilde{\theta}})^T\vect{\mu}_i\right).
\end{equation}
When $i=n$, denote $\vect{V},\vect{S},\vect{D},\vect{\mu},z=\vect{V}_n,\vect{S}_n,\vect{D}_n,\vect{\mu}_n,z_n$, the formula $\ref{LPFB_2}$ is equivalent to $\ref{LPFB_1}$ and we finish the proof. We successfully decompose $\vect{\Sigma}_i$ into lower-rank while keep the upper-bound at the same time. In the following part, we are going to show how to construct the matrix sequence $\{\vect{V}_i\}_{i=1}^n$, $\{\vect{S}_i\}_{i=1}^n$ and $\{\vect{D}_i\}_{i=1}^n$ satisfies the condition \ref{sigma_bound}. The proof is based on mathematical induction.

\noindent\begin{itemize}[leftmargin=0.3in]
\item From proof of Thm \ref{thm_PFB} in \cite{jebara2012majorization}, $z_0\to 0^+$, $\vect{\Sigma}_0=z_0\vect{I}$. Define $\vect{V}_0=\vect{0}$, $\vect{S}_0=\vect{0}$ and $\vect{D}_0=z_0\vect{I}$, it is obvious that $\vect{x}^T\vect{\Sigma}_0\vect{x}=\vect{x}^T(\vect{V}_0^T\vect{S}_0\vect{V}_0+\vect{D}_0)\vect{x}$ holds for all $\vect{x}\in\mathbb{R}^d$.

\item Assume $\vect{V}_{i-1}$, $\vect{S}_{i-1}$ and $\vect{D}_{i-1}$ satisfies condition (\ref{sigma_bound}), we are going to find $\vect{V}_i$, $\vect{S}_{i}$ and $\vect{D}_{i}$ satisfies this condition as well. From line 5 in algorithm \ref{alg_PFB}, $\vect{\Sigma}_i=\vect{\Sigma}_{i-1}+\vect{r}\vect{r}^T$, where $\vect{r}=\sqrt{\beta}\vect{l}$. Define the subspace constructed by row vectors of $\vect{V}_{i-1}$ as $\mathcal{A}=span\{\vect{V}_{i-1}(1,\cdot),...,\vect{V}_{i-1}(k,\cdot)\}.$
Map the vector $\vect{r}$ onto subspace $\mathcal{A}$ and denote the residual orthogonal to subspace $\mathcal{A}$ as $\vect{g}$
\begin{align}\label{r_decompose}
    \vect{r}&=\sum_{j=1}^d\vect{r}\vect{V}_{i-1}(j,\cdot)\vect{V}^T_{i-1}(j,\cdot)+\vect{g}\notag\\
    &=\vect{V}_{i-1}^T\vect{V}_{i-1}\vect{r}+\vect{g},
\end{align}
substitute (\ref{r_decompose}) into $\vect{\Sigma}_i=\vect{\Sigma}_{i-1}+\vect{r}\vect{r}^T$, we have

\begin{align}\label{update_sigma}
\vect{\Sigma}_i
&=\vect{\Sigma}_{i-1}+(\vect{V}_{i-1}^T\vect{V}_{i-1}\vect{r}+\vect{g})(\vect{V}_{i-1}^T\vect{V}_{i-1}\vect{r}+\vect{g})^T \notag \\
&=\vect{V}_{i-1}^T(\vect{S}_{i-1}+\vect{V}_{i-1}\vect{r}\vect{r}_i^T\vect{V}_{i-1}^T)\vect{V}_{i-1}+\vect{D}_{i-1}+\vect{gg}^T+\vect{V}_{i-1}^T\vect{V}_{i-1}\vect{r}\vect{g}^T+\vect{g}\vect{r}_i^T\vect{V}_{i-1}^T\vect{V}_{i-1}.
\end{align}

Define $\vect{a}=\vect{V}_{i-1}^T\vect{V}_{i-1}\vect{r}$, Equation (\ref{update_sigma}) can be simplified as
\begin{align}
  \vect{\Sigma}_i=\vect{V}_{i-1}^T(\vect{S}_{i-1}+\vect{V}_{i-1}\vect{r}\vect{r}^T\vect{V}_{i-1}^T)\vect{V}_{i-1}+\vect{D}_{i-1}+\vect{gg}^T+\vect{ag}^T+\vect{ga}^T,
\end{align}
the corresponding quadratic form is 
\begin{align}\label{sigma_decompose}
  \vect{x}^T\vect{\Sigma}_i\vect{x}=\vect{x}^T\left[\vect{V}_{i-1}^T(\vect{S}_{i-1}+\vect{V}_{i-1}\vect{r}\vect{r}^T\vect{V}_{i-1}^T)\vect{V}_{i-1}+\vect{D}_{i-1}+\vect{gg}^T\right]+\vect{x}^T(\vect{ag}^T+\vect{ga}^T)\vect{x}.
\end{align}

By \textbf{Lemma} \ref{lemma3}, the maximum eigenvalue of matrix $\vect{ag}^T+\vect{ga}^T$ is $\|\vect{a}\|_2\|\vect{g}\|_2$. Define $\vect{P}=\|\vect{a}\|_2\|\vect{g}\|_2I$, we have

\begin{equation}\label{upperbound_g}
    \vect{x}^T(\vect{ag}^T+\vect{ga}^T)\vect{x}\leq\max\{\lambda|\lambda \text{ is eigenvalue of } A\}\vect{x}^T\vect{x}=\|\vect{a}\|_2\|\vect{g}\|_2\|\vect{x}\|_2^2=\vect{x}^T\vect{Px}.
\end{equation}

Define 
\begin{align}\label{upperbound_formula}
    \vect{D}'_{i-1}&=\vect{D}_{i-1}+\vect{P}\in\mathbb{R}^{d\times d} \text{ (diagonal),}\notag\\
\widetilde{\vect{\Sigma}_i}:&=\vect{V}_{i-1}^T(\vect{S}_{i-1}+\vect{V}_{i-1}\vect{r}_i\vect{r}_i^T\vect{V}_{i-1}^T)\vect{V}_{i-1}+\vect{D}'_{i-1}+\vect{gg}^T,
\end{align}
by (\ref{sigma_decompose}) and (\ref{upperbound_g}), it is easy to know,
\begin{align}
  \vect{x}^T\vect{\Sigma}_i\vect{x}\leq \vect{x}^T\widetilde{\vect{\Sigma}_i}x 
\end{align}
We have already prove $x^T\widetilde{\vect{\Sigma}_i}x$ is larger than $x^T\vect{\Sigma}_ix$, then all we need is the upper-bound of $x^T\widetilde{\vect{\Sigma}_i}x$. Perform SVD decomposition on $\vect{S}_{i-1}+\vect{V}_{i-1}\vect{rr}^T\vect{V}_{i-1}^T$ and denote the result as
$$\vect{Q}_{i-1}^T\vect{S}'_{i-1}\vect{Q}_{i-1}=svd(\vect{S}_{i-1}+\vect{V}_{i-1}\vect{rr}^T\vect{V}_{i-1}^T),$$
define $\vect{V}'_{i-1}=\vect{Q}_{i-1}\vect{V}_{i-1}$, then (\ref{upperbound_formula}) can be simplified as
\begin{align}
  \widetilde{\vect{\Sigma}_i}&=\vect{V}_{i-1}^T\vect{Q}_{i-1}^T\vect{S}'_{i-1}\vect{Q}_{i-1}\vect{V}_{i-1}+\vect{D}'_{i-1}+\vect{gg}^T\notag\\
  &=\vect{V}_{i-1}^{'T}\vect{S}'_{i-1}\vect{V}'_{i-1}+\vect{D}'_{i-1}+\vect{gg}^T\notag\\
  &=\underbrace{\left[\begin{matrix}\vect{V}_{i-1}^{'T}&\frac{\vect{g}}{\|\vect{g}\|_2}\end{matrix} \right]\left[\begin{matrix}\vect{S}'_{i-1}&\vect{0}^T\\\vect{0}&\|\vect{g}\|_2^2\end{matrix}\right]\left[\begin{matrix}\vect{V}'_{i-1}\\\frac{\vect{g}^T}{\|\vect{g}\|_2} \end{matrix}\right]}_{=:\vect{B}}+\vect{D}'_{i-1}.
\end{align}

It is easy to know that $rank(\vect{B})=k+1$ and $\vect{B}$ is a $(k+1)$-svd decomposition. In order to keep the construction of $\vect{\Sigma}_i$, we have no choice but to remove the smallest eigenvalue and corresponding eigenvector from matrix $\vect{B}$. 
\begin{itemize}
 \item[case 1)] $\|\vect{g}\|_2^2\leq\arg\min_{j=1...k}\vect{S}'_{i-1}(j,j)$, remove eigenvalue $\|\vect{g}\|_2^2$ and corresponding eigenvector $\frac{\vect{g}}{\|\vect{g}\|_2}$.
\begin{align}
    \vect{\Sigma}'_i=\vect{V}_{i-1}^{'T}\vect{S}'_{i-1}\vect{V}'_{i-1}\vect{D}'_{i-1}=\widetilde{\vect{\Sigma}_i}-\vect{gg}^t=\widetilde{\vect{\Sigma}_i}-c\vect{vv}^T,
\end{align}
 where $c=\|\vect{g}\|_2^2$,$\vect{v}=\frac{\vect{g}}{\|\vect{g}\|_2}$. In this case $\vect{V}_i=\vect{V}'_{i-1}$, $\vect{S}_i=\vect{S}'_{i-1}$.
 \item[case 2)]$\|\vect{g}\|_2^2>\arg\min_{j=1...k}\vect{S}'_{i-1}(j,j)$, remove $m^{th}$ (absolute value smallest) eigenvalue in $\vect{S}'_{i-1}$ and corresponding eigenvalue $\vect{V}'_{i-1}(m,\cdot)$.
\begin{align}
    \vect{\Sigma}'_i=\vect{V}_{i-1}^{'T}\vect{S}'_{i-1}\vect{V}'_{i-1}+\vect{D}'_{i-1}+\vect{gg}^t-\vect{S}'_{i-1}(m,m)\vect{V}_{i-1}(m,\cdot)\vect{V}_{i-1}(m,\cdot)^T=\widetilde{\vect{\Sigma}_i}-c\vect{vv}^T,
\end{align}

  where $c=\vect{S}'_{i-1}(m,m)$,$\vect{v}=\vect{V}_{i-1}(m,\cdot)$. In this case,
  
  \begin{align*}
    \vect{V}_i&=\left[\begin{matrix}\vect{V}'_{i-1}(1,\cdot)\\...\\\vect{V}'_{i-1}(m-1,\cdot)\\\frac{\vect{g}}{\|\vect{g}\|_2^2}\\...\\\vect{V}'_{i-1}(k,\cdot)\end{matrix}\right],\\
    \vect{S}_i&=diag\left(\vect{S}'_{i-1}(1,1),\cdots,\vect{S}'_{i-1}(m-1,m-1),\|\vect{g}\|_2^2,\cdots,\vect{S}'_{i-1}(k,k)\right).
  \end{align*}

\end{itemize}
In both cases
\begin{align}
    \vect{\Sigma}'_i&=\vect{V}_i^T\vect{S}_i\vect{V}_i+\vect{D}'_{i-1}\\
    \widetilde{\vect{\Sigma}_i}&=\vect{\Sigma}'_i+c\vect{vv}^T,
\end{align}
where $\vect{V}_{i}\in\mathbb{R}^{k\times d}$ orthnormal, $\vect{S}_{i}\in\mathbb{R}^{k\times k}$ and $\vect{D}'_{i-1}\in\mathbb{R}^{d\times d}$ are diagonal, $\vect{v}\in\mathbb{R}^d$ and $\vect{v}^T\vect{v}=1$.
Therefore, for all $\vect{x}\in\mathbb{R}^d$ we have
\begin{equation}
  \vect{x}^T\widetilde{\vect{\Sigma}_i}\vect{x}=\vect{x}^T\vect{\Sigma}'_i\vect{x}+c\vect{x}^T\vect{vv}^T\vect{x}=\vect{x}^T\vect{\Sigma}'_i\vect{x}+c(\vect{x}^T\vect{v})^2,
\end{equation}
$\vect{\Sigma}'_i$ is not sufficient for the condition $ \vect{x}^T\vect{\Sigma}'_i\vect{x}\geq \vect{x}^T\widetilde{\vect{\Sigma}_i}\vect{x}(\forall \vect{x})$, we would like to relax it by constructing 
 $\vect{\Sigma}''_i=\vect{\Sigma}'_i+\vect{F}$ $(\vect{F}\in\mathbb{R}^{d\times d}\text{ is diagonal})$
 such that $\vect{x}^T\vect{\Sigma}''_i\vect{x}\geq \vect{x}^T\widetilde{\vect{\Sigma}_i}\vect{x}(\forall \vect{x})$, which is equivalent to
 \begin{align}\label{F_1}
  \vect{x}^T\vect{\Sigma}''_i\vect{x}&\geq \vect{x}^T\widetilde{\vect{\Sigma}_i}\vect{x}\notag\\
  \vect{x}^T(\vect{\Sigma}'_i+\vect{F})\vect{x}&\geq \vect{x}^T(\vect{\Sigma}'_i+c\vect{vv}^T)\vect{x}\notag\\
  \sum_{i=1}^d\vect{x}^2(i)\vect{F}(i)&\geq c\left(\sum_{i=1}^d\vect{x}(i)\vect{v}(i)\right)^2.
 \end{align}
 Set $\vect{F}(i)=c|\vect{v}(i)|\sum_{j=1}^d|\vect{v}(j)|$, by Lemma \ref{lemma1} we have
 \begin{align}\label{F_2}
  \sum_{i=1}^d\vect{x}^2(i)\vect{F}(i)&=\sum_{i=1}^d|\vect{x}(i)|^2\vect{F}(i)\notag\\
  &\geq\left(\sum_{i=1}^d|\vect{x}(i)|\frac{\vect{F}(i)}{\sqrt{\sum_{j=1}^d\vect{F}(i)}}\right)\notag\\
  &=\left(\sum_{i=1}^d|\vect{x}(i)|\frac{c|\vect{v}(i)|\sum_{j=1}^d|\vect{v}(j)|}{\sqrt{c}\sum_{j=1}^dc|\vect{v}(i)|}\right)^2\notag\\
  &\geq c\left(\sum_{i=1}^d\vect{x}(i)\vect{v}(i)\right)^2
 \end{align}
 Combine (\ref{F_1}) and (\ref{F_2}), $\vect{\Sigma}''_i=\vect{\Sigma}'_i+\vect{F}$ is what we want. Let $\vect{D}_i\!=\!\vect{D}_{i-1}'\!+\!\vect{F}$, together with $\vect{V}_i$ and $\vect{S}_i$ defined in former case 1) and case 2), we have
 $$\vect{x}^T\vect{\Sigma}_i\vect{x}\leq\vect{x}^T\widetilde{\vect{\Sigma}}_i\vect{x}\leq\vect{x}^T\vect{\Sigma}''_i\vect{x}=\vect{x}^T\left(\vect{V}_i^T\vect{S}_i\vect{V}_i+\vect{D}_i\right)\vect{x}$$

\end{itemize}

\noindent We already construct sequences of matrix $\{\vect{V}_i\}_{i=1}^n$, $\{\vect{S}_i\}_{i=1}^n$ and $\{\vect{D}_i\}_{i=1}^n$ satisfies the condition (\ref{sigma_bound}). Define $\vect{V},\vect{S},\vect{D}=\vect{V}_n,\vect{S}_n,\vect{D}_n$, keep $\vect{\mu}$ and $z$ the same as what they are in Thm \ref{thm_PFB}, the formula (\ref{LPFB_2}) implies
\begin{equation}
  Z_{\vect{x}}(\vect{\theta})\leq z\exp\left(\frac{1}{2}(\vect{\theta}-\vect{\widetilde{\theta}})^T(\vect{V}^T\vect{S}\vect{V} + \vect{D})(\vect{\theta}-\vect{\widetilde{\theta}})+(\vect{\theta}-\vect{\widetilde{\theta}})^T\vect{\mu}\right)
  \end{equation}

\end{proof}

\end{document}